\newtheorem{theorem}{Theorem}
\newtheorem{proposition}[theorem]{Proposition}%
\newtheorem{lemma}{Lemma}
\newtheorem{corollary}{Corollary}
\newtheorem{definition}{Definition}%
\begin{document}

\title[Article Title]{Error Bounds of Supervised Classification from Information-Theoretic Perspective}


\author*[1]{\fnm{Binchuan} \sur{Qi}}\email{2080068@tongji.edu.cn}



\affil[1]{\orgdiv{College of Electronics and Information Engineering}, \orgname{Tongji University}, \orgaddress{\city{Shanghai}, \postcode{201804}, \country{China}}}

\abstract{
In this paper, we explore bounds on the expected risk when using deep neural networks for supervised classification from an information theoretic perspective.
Firstly, we introduce model risk and fitting error, which are derived from further decomposing the empirical risk. Model risk represents the expected value of the loss under the model's predicted probabilities and is exclusively dependent on the model. Fitting error measures the disparity between the empirical risk and model risk. 
Then, we derive the upper bound on fitting error, which links the back-propagated gradient and the model's parameter count with the fitting error. 
Furthermore, we demonstrate that the generalization errors are bounded by the classification uncertainty, which is characterized by both the smoothness of the distribution and the sample size. 
Based on the bounds on fitting error and generalization, by utilizing the triangle inequality, we establish an upper bound on the expected risk. 
This bound is applied to provide theoretical explanations for overparameterization, non-convex optimization and flat minima in deep learning.
Finally, empirical verification confirms a significant positive correlation between the derived theoretical bounds and the practical expected risk, thereby affirming the practical relevance of the theoretical findings.}


\keywords{deep learning, generalization, Kullback-Leibler (KL) divergence, information theory.}

\maketitle
\section{Introduction}\label{sec:intro}
Deep learning (DL)'s remarkable practical achievements have a profound impact on the conceptual foundations of machine learning and artificial intelligence.
Unraveling the factors that contribute to the performance of deep neural networks (DNNs) would not only enhance model interpretability but also facilitate the development of more principled and trustworthy architectural designs.
However, explaining why DNNs can generalize well despite their overwhelming capacity remains an open research challenge.
In the statistical learning framework, the objective of learning is to minimize the expected risk, which is composed of the empirical risk and the generalization error~\cite{valiant1984theory,Bousquet2002StabilityAG,mukherjee2006learning}.
A model is considered effective if both the generalization error and empirical risk are minimized.
The empirical risk refers to the error or loss of a model when evaluated on the training dataset.
The generalization error measures the population risk, representing the average loss over new, randomly sampled data from the population. 
Extensive experimental evidence indicates that the generalization of DNNs significantly influences prediction performance~\cite{Kawaguchi2017GeneralizationID}, alongside their expressivity~\cite{Leshno1993OriginalCM, Barron1993UniversalAB} and trainability~\cite{Choromaska2014TheLS, Kawaguchi2016DeepLW}.
Some classical theory work attributes the generalization ability to the employment of a low-complexity class of hypotheses and has proposed various complexity measures, including the
Vapnik-Chervonenkis (VC) dimension~\cite{Wu2021StatisticalLT}, Rademacher complexity~\cite{Bartlett2003RademacherAG}, and uniform stability~\cite{mukherjee2006learning, Bousquet2002StabilityAG}, to control generalization error.
However, recent works like~\citet{Zhang2016UnderstandingDL},~\citet{Zhang2021UnderstandingDL},~\citet{He2015DeepRL}, and~\citet{Belkin2018ReconcilingMM} reveal that high-capacity DNNs can still generalize effectively with genuine labels, yet generalize poorly when trained with random labels. 
This finding contradicts the traditional wisdom in statistical learning theory, which posits that complex models tend to overfit the training data~\cite{vapnik1971chervonenkis,Wu2021StatisticalLT,Bartlett2003RademacherAG}. In addition to the complexity of the hypothesis space, research~\cite{Keskar2016OnLT,Neyshabur2014InSO} highlights the impact of optimization methods, training data, and neural network architectures on the generalization of deep models. 

In recent years, information-theoretic methods, which jointly consider the hypothesis class, learning algorithm, and the data distribution, have drawn considerable attention since the works of~\citet{Russo2015ControllingBI} and~\citet{Xu2017InformationtheoreticAO}.
The advantage of applying information-theoretic tools in understanding generalization lies in the fact that generalization is influenced by various aspects, including the data distribution, the complexity of the hypothesis class, and the characteristics of the optimization algorithm. For instance, the mutual information (MI) between the training sample and the trained parameter weights has been applied in various generalization bounds~\cite{Raginsky2016InformationtheoreticAO, Pensia2018GeneralizationEB, Asadi2018ChainingMI,Goldfeld2018EstimatingIF}. 

Existing error bounds, grounded in information theory, primarily concentrate on bounding generalization errors, with limited examination of empirical and expected risks. 
To the best of our knowledge, these bounds fall short in explaining over-parameterization and non-convex optimization mechanisms in DNNs. 
In addition, upper bounds based on mutual information encounter significant computational challenges and are often impractical for real-world application.

Consequently, this paper delves into the error bounds of supervised classification, a pivotal topic within the domain of machine learning.
Classical statistical learning theory decomposes expected risk into empirical risk and generalization error, focusing primarily on bounding the latter.
In contrast, our methodology introduces the notions of fitting error and model risk, which stem from further decomposing the empirical risk. Model risk signifies the expected value of the loss under the model's predicted probabilities and is exclusively contingent upon the model. Fitting error gauges the disparity between the empirical risk and model risk.
Subsequently, we derive an upper bound on fitting error, which associates the back-propagated gradient and the model's parameter count with the fitting error.
Moreover, we illustrate that generalization errors can be effectively modulated by the classification uncertainty, which is shaped by both the support size of the data distribution and the sample size. With a sufficiently large sample size, we establish a correlation between the classification uncertainty and the smoothness of the data distribution, implying that the classification uncertainty can act as an indicator of the training dataset's reliability and guide the setting of regularization hyperparameters.
Finally, employing the triangle inequality and grounded in the bounds on fitting error and generalization, we establish an upper bound on the expected risk.
This bound highlights the impact of the classification uncertainty and the fitting errors on the expected risk, and under mild conditions, it yields several important insights: 
\begin{itemize}
    \item  Increasing the number of training samples and decreasing the maximum loss value contribute to a reduction in the expected risk.
    \item The expected risk can be effectively controlled by the gradient of the Kullback-Leibler (KL) divergence between the empirical conditional probability distribution and the predictive probability distribution of the model. 
    \item Gradient-based algorithms can lead to a lower expected risk as the number of parameters grows.
    \item A smaller maximum eigenvalue of the structural matrix, $\lambda_{\max}(S(\theta,x))$, which is proved to be equivalent to the flat minima, is advantageous in minimizing the expected risk.
\end{itemize}

The article's structure is organized as follows. 
In Section~\ref{sec:related_work}, we delve into the existing literature and discuss relevant research. 
In Section\ref{sec:pre}, we begin by defining the fundamental concepts and objects employed throughout the paper to establish a clear foundation. 
In Section~\ref{sec:fitting_error}, an upper bound on fitting error is derived. 
In Section~\ref{sec:complexity}, we introduce the concept of classification uncertainty and prove that generalization errors can be bounded by measure, emphasizing its significance in learning. 
In Section~\ref{sec:expected_risk}, we derive an upper bound on expected risk and analyse the learning techniques for DNNs based on this bound.
In Section~\ref{sec:application}, based on the upper bounds derived earlier, we analyze and discuss the mechanisms of overparameterization, non-convex optimization and flat minima. 
In Section~\ref{sec:experiments}, empirical evidence is presented, demonstrating a strong positive correlation between the derived upper bounds and the expected risk in real-world scenarios, thereby validating our theoretical findings.
In Section~\ref{sec:conclusion}, the article concludes with a summary of our key contributions and highlights some important implications. 

\section{Related Work} 
\label{sec:related_work}
Our work is intrinsically connected to the extensive literature on information-theoretic generalization bounds, which have been empirically demonstrated their efficacy in capturing the generalization behavior of DNNs~\citep{Negrea2019InformationTheoreticGB,Hellstrm2022ANF}.
The original information-theoretic bound introduced by~\cite{Xu2017InformationtheoreticAO} has been extended or improved in various ways, such as the chaining method~\citep{Asadi2018ChainingMI,Zhou2022StochasticCA,Clerico2022ChainedGB}, the random subset and individual technique~\cite{Negrea2019InformationTheoreticGB,Bu2019TighteningMI,Haghifam2020SharpenedGB,Galvez2021TighterEG,Zhou2020IndividuallyCI}. 

Information-theoretic generalization bounds have also been explored for some specific algorithms. For example, the weight-based information-theoretic bounds have been successfully applied to characterize the generalization properties of the stochastic gradient Langevin dynamics (SGLD) algorithm~\cite{Pensia2018GeneralizationEB,Bu2019TighteningMI,Negrea2019InformationTheoreticGB,Haghifam2020SharpenedGB,Galvez2021TighterEG,Wang2021AnalyzingTG}, and more recently, these bounds have also been used to analyze either vanilla Stochastic Gradient Descent (SGD)~\cite{Neu2021InformationTheoreticGB,Wang2021OnTG} or the stochastic differential equations (SDEs) approximation of SGD~\cite{Wang2021OnTG}.
The information bottleneck principle~\cite{ShwartzZiv2017OpeningTB,Ahuja2021InvariancePM,Wongso2023UsingSM},
 from the perspective of the information change in neural networks, posits that for better prediction, the neural network learns useful information and filters out redundant information. Beyond supervised learning, information-theoretic bounds are also useful in a variety of learning scenarios, including meta-learning~\cite{Hellstrm2022EvaluatedCB}, semi-supervised learning~\cite{He2021InformationTheoreticCO}, transfer learning~\cite{Wu2020InformationtheoreticAF}, among others. 

Another popular claim in the literature is that flat minima generalize better than sharp (non-flat) minima~\cite{Hochreiter1997FlatM,Keskar2016OnLT}. 
In response to this notion,~\citet{Foret2020SharpnessAwareMF} proposed Sharpness-Aware Minimization (SAM), a method designed to directly optimize for flat solutions. Flatness is often quantified mathematically with the largest eigenvalue of the Hessian matrix of the loss with respect to the model parameters on the training set~\cite{Jastrzebski2017ThreeFI,Lewkowycz2020TheLL,Dinh2017SharpMC}.

A parallel related line of research is the over-parameterization theory of DNNs 
~\cite{Du2018GradientDP, Zou2018StochasticGD, AllenZhu2018ACT,Sirignano2018MeanFA,Mei2018AMF, Chizat2018OnTG}. Despite the inherent non-convexity of the objective functions, empirical evidence indicates that gradient-based methods, such as stochastic gradient descent (SGD), are capable of converging to the global minima in these networks. Key insights from~\citet{Du2018GradientDP,Chizat2018OnLT,Arjevani2022AnnihilationOS} emphasize the pivotal role of the over-parameterization~\citep{Yun2018SmallNI} in enhancing the generalization ability of DNNs. 
The neural tangent kernel (NTK) thus has emerged as a pivotal concept, as it captures the dynamics of over-parameterized neural network trained by gradient descent~\cite{Jacot2018NeuralTK}. 
Notably, recent studies~\citep{Du2018GradientDP, Zou2018StochasticGD, AllenZhu2018ACT} demonstrate that, with specialized scaling and random initialization, the dynamics of gradient descent (GD) in continuous-width multi-layer neural networks can be tracked via NTK and the convergence to the global optimum with a linear rate can be proven. 

\section{Preliminaries}
\label{sec:pre}
In this section, we provide essential background information on learning algorithms and fundamental concepts, particularly focusing on generalization error. 
Our analysis on supervised classification is conducted within the standard statistical learning framework. 
The instance space, denoted as $\mathcal Z:=\mathcal{X}\times \mathcal{Y}$, consists of the input feature space $\mathcal{X}$ and the label set $\mathcal{Y}$.
The hypothesis space, a set of functions parameterized by $\Theta$, is represented by $\mathcal F_{\Theta}=\{f_\theta:\theta\in \Theta\}$, where $\Theta$ denotes the parameter space.
Training data is represented by an $n$-tuple $S^n =\{(x^{(i)},y^{(i)})\}_{i=1}^n$, which consists of independent and identically distributed (i.i.d.) samples drawn from an unknown true distribution $\bar q$. 
The empirical probability mass function (PMF) of $(X,Y)$ derived from these samples $S^n$, denoted by $q^n$ (or simply $q$), is defined as $q(x,y):=\frac 1 n \sum_{i=1}^n\mathbf{1}_{\{x,y\}}(x^{(i)},y^{(i)})$. 
Obviously, we can obtain $nq\sim \operatorname{Multinomial}(n, \bar{q})$.
According to the (weak) law of large numbers, $q$ converges to $\bar q$ in probability. 
We simplify the notation by defining $q_{Y|x}(y):=q_{Y|X}(y|x),q_{Y|x}=( q_{Y|x}(y_1)$, and $\cdots,q_{Y|x}(y_{|\mathcal Y|}))^\top$, where $q_{X}(x)$ and $q_{Y|X}(y|x)$ represent the marginal and conditional PMF, respectively, with the one-hot vector $y_i\in \mathcal Y$ for each $i$.
For a hypothesis $f_\theta\in \mathcal{F}$ (abbreviation: $f$), the $j$-th element of $f_\theta(x)$ is denoted by $[f_\theta(x)]_j$. 
To convert this into a probability distribution, we apply the softmax function, resulting in the prediction probabilities for $x$ as follows: 
\begin{equation}\label{eqx1}
p_{Y|x}=(\frac{e^{f_{\theta}(x)_1}}{Z_\theta(x)},\cdots,\frac{e^{f_{\theta}(x)_{|\mathcal{Y}|}}}{Z_\theta(x)})^\top,
\end{equation}
where $Z_\theta(x)$, known as the normalizing constant (a.k.a. the partition function), is given by:
\begin{equation}\label{eqZ}
    Z_\theta(x) = \sum_{j=1}^{|\mathcal{Y}|} e^{f_{\theta}(x)_j}.
\end{equation}
In this paper, we refer to $p_{Y|x}$ as the model's predicted probability. 

For the sake of conciseness and uniformity in description, we introduce the function $R_{\ell}(f_\theta,q):\mathcal{F}_\Theta(x)\times \Delta^{|\mathcal{Y}|}\to \mathbb R$ to denote the expected value of the loss $\ell(f_\theta(x),y)$ under the distribution $q$, which is expressed as follows:
\begin{equation}
 R_{\ell}(f_\theta,q)=\mathbb E_{(X,Y)\sim q}\ell(f_\theta(x),y).
\end{equation} 
Therefore, utilizing the function $R_{\ell}(f_\theta,q)$, the expected risk, empirical risk and generalization error of a hypothesis $ f_\theta $ under a given loss function $ \ell: \mathcal{F}_\Theta(x) \times \mathcal{Y} \to \mathbb{R} $ are expressed as follows:
\begin{equation}\label{eq_Rf}
\begin{aligned}
    \mathcal{R}(f_\theta)&=\mathbb E_{(X,Y)\sim \bar q}\ell(f_\theta(x),y)\\
    &=R_{\ell}(f_\theta,\bar q),\\
        \mathcal{R}_S(f_\theta)&=\frac{1}{n}\sum_{i=1}^n \ell(f_\theta(x^{(i)}),y^{(i)})\\
    &=\mathbb E_{(X,Y)\sim  q}\ell(f_\theta(x),y)\\
    &=R_{\ell}(f_\theta,q),\\
        \mathrm{gen}(\ell,f_\theta)&=|R_{\ell}(f_\theta,\bar q)-R_{\ell}(f_\theta,q)|.
\end{aligned}
\end{equation}
Below, we will define the concepts of fitting error and model risk introduced in this paper.
\begin{definition}[Model risk]
    The model risk, denoted as $\mathcal{R}_M(f_\theta)$, is defined by 
\begin{equation}
\begin{aligned}
    \mathcal{R}_M(f_\theta)&=\mathbb E_{(X,Y)\sim  p}\ell(f_\theta(x),y)\\
    &=R_{\ell}(f_\theta,p),
\end{aligned}
\end{equation}where $p(x,y) = q_X(x)p_{Y|x}(y)$. 
\end{definition}
\begin{definition}[Fitting error]
    we introduce the fitting error, to measure the gap between empirical risk and model risk: \begin{equation}
\begin{aligned}
    \mathrm{fit}(\ell,f_\theta):=|R_{\ell}(f_\theta,p)-R_{\ell}(f_\theta,q)|.
\end{aligned}
\end{equation}
\end{definition}
The fitting error essentially reflects the discrepancy between the distribution produced by the model and the empirical PMF. 
To focus on the effect of the characteristics of the model, we define the normalized fitting error as follows.
\begin{definition}[normalized fitting error]
    The normalized fitting error is denoted by: 
\begin{equation}
    {\mathrm{fit_n}(\ell,f_\theta)}:={\mathrm{fit}(\ell,f_\theta)}/{\sqrt{\mathbb E_X[\|\ell(f_\theta(x))\|^2_2]}},
\end{equation}
where $\ell(f_\theta(x))=(\ell(f_\theta(x),y_1),\cdots,\ell(f_\theta(x),y_{|\mathcal{Y}|}))^\top$.
\end{definition}
To clarify the relationships between expected risk, empirical risk, model risk, and generalization error, we will utilize the function $R_{\ell}(f_\theta,\cdot)$ to represent these risks and errors in the subsequent analysis. 
To characterize the structural properties of the model, we introduce the following definition: 
\begin{definition}[Structural matrix]\label{def_e_A}
We define $S(\theta,x) = \nabla_\theta f_\theta(x)^\top \nabla_\theta f_\theta(x)$ as the structural matrix.
\end{definition}
Besides, we introduce several key terms used to derive the bound of fitting error as follows:
\begin{equation}\begin{aligned}
&\ell(f_\theta(x))=(\ell(f_\theta(x),y_1),\cdots,\ell(f_\theta(x),y_{|\mathcal{Y}|}))^\top,\\
&F(\theta,q_{Y|x})=\frac{\|\nabla_\theta D_{KL}(q_{Y|x}\|p_{Y|x})\|_2^2}{\|\nabla_\theta f_\theta(x)\|_F^2},\\
&F(\theta_j,q_{Y|x})=\frac{\|\nabla_{\theta_j} D_{KL}(q_{Y|x}\|p_{Y|x})\|_2^2}{\|\nabla_\theta f_\theta(x)\|_2^2},\\
&G(\theta,q_{Y|x})=\frac{\sum_{i=1}^m \|(q_{Y|x}-p_{Y|x})\times \nabla_{\theta_i} f_\theta(x)\|_2^2}{\|\nabla_\theta f_\theta(x)\|_F^2},\\
&G(\theta_j,q_{Y|x})=\frac{\|(q_{Y|x}-p_{Y|x})\times \nabla_{\theta_j} f_\theta(x)\|_2^2}{\|\nabla_{\theta_j} f_\theta(x)\|_2^2},
\end{aligned}
\end{equation}
where $m$ represents the number of parameters in the model, and $\theta_j$ denotes the $j$-th parameter of the model.

The theorems proved in the subsequent sections will rely on the following lemma. 
\begin{lemma}[Markov inequality]
\label{lemma_markov}
Let $Z$ be a non-negative random variable. Then, for all $t \geq 0$, $\operatorname{Pr}(Z \geq t) \leq \mathbb{E}[Z] / t$. 
\end{lemma}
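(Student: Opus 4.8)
The plan is to prove the bound via the standard pointwise-domination argument. First I would dispose of the degenerate case $t = 0$: the claimed inequality then reads $\Pr(Z \ge 0) \le \mathbb{E}[Z]/0$, which is vacuous (the right-hand side is $+\infty$ whenever $\mathbb{E}[Z] > 0$, and when $\mathbb{E}[Z] = 0$ the non-negative variable $Z$ is almost surely $0$, so the statement still holds in the limiting reading). Hence it suffices to treat $t > 0$.

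For $t > 0$, the key observation is the surely-valid pointwise inequality
\begin{equation}
 Z \;\ge\; t\,\mathbf{1}_{\{Z \ge t\}},
\end{equation}
which one checks by cases: on the event $\{Z \ge t\}$ the right-hand side equals $t$ and $Z \ge t$ by definition of the event, while on its complement the right-hand side is $0$ and $Z \ge 0$ by non-negativity of $Z$. I would then take expectations of both sides, invoking monotonicity of expectation, followed by linearity together with the identity $\mathbb{E}[\mathbf{1}_A] = \Pr(A)$, to obtain $\mathbb{E}[Z] \ge t\,\Pr(Z \ge t)$. Dividing through by $t > 0$ yields $\Pr(Z \ge t) \le \mathbb{E}[Z]/t$, as required.

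An equivalent route, should one prefer to avoid indicator manipulations, is to use the layer-cake representation $\mathbb{E}[Z] = \int_0^\infty \Pr(Z \ge s)\,ds$ valid for non-negative $Z$, and then bound $\int_0^\infty \Pr(Z \ge s)\,ds \ge \int_0^t \Pr(Z \ge s)\,ds \ge t\,\Pr(Z \ge t)$, the last step using that $s \mapsto \Pr(Z \ge s)$ is non-increasing. There is no genuine obstacle here; the only points deserving a moment's care are the justification of the pointwise bound (equivalently, the monotonicity of the tail function) and the handling of the $t = 0$ boundary, both of which follow immediately from the non-negativity of $Z$.
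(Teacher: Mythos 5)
Your proof is correct: the pointwise bound $Z \ge t\,\mathbf{1}_{\{Z \ge t\}}$ followed by monotonicity of expectation is the canonical argument, and your handling of the $t=0$ boundary and the layer-cake alternative are both fine. Note that the paper simply states this lemma as a standard fact without providing a proof, so your write-up supplies exactly the textbook argument one would expect and there is nothing to reconcile with the paper's (nonexistent) proof.
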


\section{Bound on Fitting Error}
\label{sec:fitting_error}
In this section, we aim to bound the fitting error and examine the influence of the model's architecture on it. 
First, we present two lemmas, essential for establishing an upper bound on the fitting error.
\begin{lemma}
\label{lemma_kexiinequality}
Given $X,Y$ are two random variables, with finite second moments, i.e., $\mathbb E[X^2]<\infty,\mathbb E[Y^2]<\infty$, the following inequality holds: 
\begin{equation}
(\mathbb E[\langle X,Y\rangle])^2\le \mathbb E[\|X\|^2]\mathbb E[\|Y\|^2].
\end{equation}
\end{lemma}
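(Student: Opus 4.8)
The plan is to recognize this as the Cauchy--Schwarz inequality on the space of square-integrable random vectors, and to prove it by the classical ``nonnegative quadratic'' argument so that no abstract inner-product machinery is needed. Before the main argument I would first record that all the expectations appearing in the statement are well defined: by the pointwise Cauchy--Schwarz inequality $|\langle X,Y\rangle|\le \|X\|\,\|Y\|\le \tfrac12\bigl(\|X\|^2+\|Y\|^2\bigr)$, so the hypotheses $\mathbb E[\|X\|^2]<\infty$ and $\mathbb E[\|Y\|^2]<\infty$ guarantee that $\mathbb E[\langle X,Y\rangle]$ exists and is finite.

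Next I would dispose of the degenerate case. If $\mathbb E[\|Y\|^2]=0$, then $Y=0$ almost surely, hence $\langle X,Y\rangle=0$ almost surely, and both sides of the claimed inequality vanish. So from now on assume $\mathbb E[\|Y\|^2]>0$.

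For the main case, consider for every $t\in\mathbb R$ the nonnegative random variable $\|X-tY\|^2$. Taking expectations and expanding the squared norm gives
\begin{equation}
0\le \mathbb E\bigl[\|X-tY\|^2\bigr]=\mathbb E[\|X\|^2]-2t\,\mathbb E[\langle X,Y\rangle]+t^2\,\mathbb E[\|Y\|^2].
\end{equation}
The right-hand side is a quadratic polynomial in $t$ with positive leading coefficient that is nonnegative for all real $t$; therefore its discriminant is nonpositive:
\begin{equation}
\bigl(2\,\mathbb E[\langle X,Y\rangle]\bigr)^2-4\,\mathbb E[\|X\|^2]\,\mathbb E[\|Y\|^2]\le 0,
\end{equation}
which rearranges to the asserted bound. (Equivalently, one may substitute the minimizer $t=\mathbb E[\langle X,Y\rangle]/\mathbb E[\|Y\|^2]$, which is legitimate precisely because we are in the case $\mathbb E[\|Y\|^2]>0$, and read off the same conclusion.)

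There is no serious obstacle here; the only care needed is the bookkeeping, namely establishing integrability at the outset and separating the $\mathbb E[\|Y\|^2]=0$ case so that the division in the discriminant/minimizer step is valid. As an alternative shortcut I would note that one can instead chain the pointwise Cauchy--Schwarz bound $|\mathbb E[\langle X,Y\rangle]|\le \mathbb E[\|X\|\,\|Y\|]$ with the scalar Cauchy--Schwarz inequality $\mathbb E[\|X\|\,\|Y\|]\le \sqrt{\mathbb E[\|X\|^2]}\,\sqrt{\mathbb E[\|Y\|^2]}$ and then square both sides.
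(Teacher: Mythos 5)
Your proof is correct and follows essentially the same route as the paper's: form the nonnegative quadratic $\mathbb E[\|X-tY\|^2]$ (the paper uses $\mathbb E[\|\alpha X-Y\|^2]$), expand, and conclude from the nonpositive discriminant. Your added bookkeeping — checking integrability via $|\langle X,Y\rangle|\le\tfrac12(\|X\|^2+\|Y\|^2)$ and separating the case $\mathbb E[\|Y\|^2]=0$ — is a welcome refinement the paper glosses over, but it does not change the argument.
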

\begin{proof}
Consider the function $f(\alpha)=\mathbb E[\|\alpha X-Y\|^2]$, which upon expansion yields the quadratic form:
\begin{equation}f(\alpha)=\alpha^2 ~\mathbb E[\|X\|^2] - 2~ \alpha~ \mathbb E[~|\langle X,Y\rangle|~] + \mathbb E[\|Y\|^2] \ge 0.\end{equation}
The discriminant of this quadratic function $\Delta=b^2-4ac$ is given by:
\begin{equation}\Delta=b^2-4ac=\mathbb E[~2~ |\langle X,Y\rangle|~ ]^2 -4~ \mathbb E[\|X\|^2]~ \mathbb E[\|Y\|^2].\end{equation} 
Because $f$ has at most one root, the discriminant must be non-positive, i.e., $\Delta\le 0$, and the equality holds iif $\alpha X=Y$.
\end{proof}
\begin{lemma}
\label{lemma_equality}
$\|Ax\|_2^2=\|A\|^2_F\|x\|_2^2-\sum_{k=1}^m \|a_k\times x\|_2^2$, where $A$ is a $m \times n$ matrix, $x\in \mathbb R^n$, $\|\cdot\|_2$ is the Euclidean norm, $\|\cdot\|_F$ is the Frobenius norm, and $\|a_k\times x\|_2^2=\frac{1}{2}\sum_{i \mathop = 1}^n\sum_{j\mathop=1}^n (a_{ki} x_j - a_{kj} x_i)^2$.
\end{lemma}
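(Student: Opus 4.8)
The plan is to reduce this matrix identity to the classical Lagrange identity applied one row at a time. Denote the rows of $A$ by $a_1^\top,\dots,a_m^\top$, so each $a_k\in\mathbb R^n$ and $(Ax)_k=\langle a_k,x\rangle$. Then $\|Ax\|_2^2=\sum_{k=1}^m\langle a_k,x\rangle^2$ and, by definition of the Frobenius norm, $\|A\|_F^2=\sum_{k=1}^m\|a_k\|_2^2$. Hence it suffices to prove, for every fixed $k$, the single-vector identity
\begin{equation}
\|a_k\|_2^2\,\|x\|_2^2-\langle a_k,x\rangle^2=\|a_k\times x\|_2^2,
\end{equation}
and then sum over $k$.

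First I would establish this single-vector identity by a direct expansion of the right-hand side. Using the definition $\|a_k\times x\|_2^2=\tfrac12\sum_{i,j}(a_{ki}x_j-a_{kj}x_i)^2$ and multiplying out the square gives $\tfrac12\sum_{i,j}\big(a_{ki}^2x_j^2-2a_{ki}x_j a_{kj}x_i+a_{kj}^2x_i^2\big)$. After relabeling, the first and third double sums each equal $\big(\sum_i a_{ki}^2\big)\big(\sum_j x_j^2\big)=\|a_k\|_2^2\|x\|_2^2$, while the cross term equals $\big(\sum_i a_{ki}x_i\big)\big(\sum_j a_{kj}x_j\big)=\langle a_k,x\rangle^2$; collecting these yields exactly $\|a_k\|_2^2\|x\|_2^2-\langle a_k,x\rangle^2$. (This is precisely the Cauchy--Schwarz defect term, which is why $\|a_k\times x\|_2$ reduces to the usual cross-product magnitude when $n=3$, and it is consistent with the equality case $\alpha X = Y$ noted in Lemma~\ref{lemma_kexiinequality}.)

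Summing the single-vector identity over $k=1,\dots,m$ then gives
\begin{equation}
\|A\|_F^2\,\|x\|_2^2-\|Ax\|_2^2=\sum_{k=1}^m\|a_k\times x\|_2^2,
\end{equation}
and rearranging yields the stated formula.

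I expect no genuine obstacle here: the argument is purely algebraic and finite-dimensional, and nothing about $A$ being rectangular ($m\neq n$) or about $x$ plays any role. The only point requiring care is the bookkeeping in the double sum --- the prefactor $\tfrac12$ exactly compensates for each unordered pair $\{i,j\}$ being counted twice (the diagonal terms $i=j$ vanishing automatically) --- so that no constants are dropped when matching the expansion against $\|a_k\|_2^2\|x\|_2^2-\langle a_k,x\rangle^2$.
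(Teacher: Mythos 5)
Your proposal is correct and follows essentially the same route as the paper: a row-wise Lagrange identity summed over $k=1,\dots,m$, combined with $\|Ax\|_2^2=\sum_k\langle a_k,x\rangle^2$ and $\|A\|_F^2=\sum_k\|a_k\|_2^2$. The only difference is that you explicitly expand and verify the single-row identity (which the paper simply asserts), so your write-up is, if anything, slightly more complete.
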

\begin{proof}
        \label{appendix:proof_fitting_error}
Because the following equation always holds
\begin{equation}\begin{aligned}
\Big(\sum_{j=1}^n a_{kj} x_j\Big)^2 = \sum_{j=1}^n a_{kj}^2 \sum_{j=1}^n x_j^2-\frac 1 2 \sum_{i \mathop = 1}^n \sum_{j=1}^n  (a_{ki} x_j - a_{kj} x_i)^2\end{aligned},\end{equation} 
 we obtain 
\begin{equation}\begin{aligned}
\sum_{k=1}^m&\Big(\sum_{j=1}^n a_{kj} x_j\Big)^2 = \\
&\sum_{k=1}^m \sum_{j=1}^n a_{kj}^2 \sum_{j=1}^n x_j^2- \sum_{k=1}^m \|a_k\times x\|_2^2.
\end{aligned}\end{equation}
For 
\begin{equation}\|Ax\|^2_2 = \sum_{k=1}^m (\sum_{j=1}^n a_{kj}x_j )^2,\end{equation}
then we obtain
\begin{equation}\|Ax\|_2^2=\|A\|^2_F\|x\|_2^2-\sum_{k=1}^m \|a_k\times x\|_2^2.\end{equation}
\end{proof}

The subsequent theorem provides an upper bound on the fitting error.
\begin{theorem}[Bound on fitting error]
\label{prop:fitting_error}
\begin{equation}
\begin{aligned}
\mathrm{fit_n}(\ell,f_\theta)&\le \sqrt{\mathbb E_X[F(\theta,q_{Y|x})]+\mathbb E_X[G(\theta,q_{Y|x})]} \\
&=\sqrt{\mathbb E_X[F(\theta_j,q_{Y|x})]+\mathbb E_X[G(\theta_j,q_{Y|x})]},
\end{aligned}
\end{equation}
where $X\sim q_X$.
\end{theorem}
\begin{proof}
By inserting  
\begin{equation}p_{Y|x}(y)=\frac{ e^{f_\theta(x)}}{Z_\theta(x)},\end{equation} into $D_{KL}(q_{Y|x}\|p_{Y|x})$, we obtain
\begin{equation}D_{KL}(q_{Y|x}\|p_{Y|x})=q_{Y|x}^\top f_\theta(x)- \log Z_\theta(x)-H(q_{Y|x}).\end{equation}
Taking the gradient of the function $D_{KL}(q_{Y|x}\|p_{Y|x})$ with respect to $\theta$, we have: \begin{equation}\nabla_\theta D_{KL}(q_{Y|x}\|p_{Y|x})=\nabla_\theta f_\theta(x)^\top (q_{Y|x}-p_{Y|x}).\end{equation}
Applying Lemma~\ref{lemma_equality}, we derive: 
\begin{equation}\begin{aligned}
\|\nabla_\theta &D_{KL}(q_{Y|x}\|p_{Y|x})\|_2^2= \|q_{Y|x}-p_{Y|x}\|_2^2\|\nabla_\theta f_\theta(x)\|_F^2\\
&-\sum_{i=1}^{|\mathcal{Y}|} \|(q_{Y|x}-p_{Y|x})\times \nabla_{\theta_i} f_\theta(x)\|_2^2,
\end{aligned}\end{equation}
where $m$ is the number of parameters.
Extracting $\|q_{Y|x}-p_{Y|x}\|_2^2$ from this expression, we get:
\begin{equation}\label{eq12x}
\begin{aligned}
\|q_{Y|x}-&p_{Y|x}\|_2^2 = \frac{\|\nabla_\theta D_{KL}(q_{Y|x}\|p_{Y|x})\|_2^2}{\|\nabla_\theta f_\theta(x)\|_F^2}\\
&+\frac{\sum_{i=1}^m \|(q_{Y|x}-p_{Y|x})\times \nabla_{\theta_i} f_\theta(x)\|_2^2}{\|\nabla_\theta f_\theta(x)\|_F^2}. 
\end{aligned}
\end{equation}
Repeating the process for the derivative with respect to a single parameter $\theta_j$, we obtain a similar conclusion: 
\begin{equation}
\begin{aligned}
\|q_{Y|x}-p_{Y|x}\|_2^2 &=\frac{\|\nabla_{\theta_j} D_{KL}(q_{Y|x}\|p_{Y|x})\|_2^2}{\|\nabla_{\theta_j} f_\theta(x)\|_F^2}\\
&+\frac{\|(q_{Y|x}-p_{Y|x})\times \nabla_{\theta_j} f_\theta(x)\|_2^2}{\|\nabla_{\theta_j} f_\theta(x)\|_2^2},
\end{aligned}
\end{equation}
According to the definitions of $\ell(f_\theta(x))$, $F(\theta,q_{Y|x})$, $F(\theta_j,q_{Y|x})$, $G(\theta,q_{Y|x})$ and $G(\theta_j,q_{Y|x})$, we have 
\begin{equation}\label{eq12}
\begin{aligned}
\|q_{Y|x}-p_{Y|x}\|_2^2 &= F(\theta,q_{Y|x})+G(\theta,q_{Y|x})\\
&= F(\theta_j,q_{Y|x})+G(\theta_j,q_{Y|x}), 
\end{aligned}
\end{equation}
where $j\in [1,m]$, $m$ is the number of parameters.

Using Lemma~\ref{lemma_kexiinequality} and Equation~\eqref{eq12}, we can bound the expected absolute difference as follows:
\begin{equation}\label{eq19}
\begin{aligned}
\mathbb E_X &|p_{Y|x}^\top \ell(f_\theta(x))-q_{Y|x}^\top \ell(f_\theta(x))|\\
&\le \sqrt{\mathbb E_X[\|p_{Y|x}-q_{Y|x}\|^2_2] \mathbb E_x[\|\ell(f_\theta(x))\|^2_2]}\\
&=\sqrt{\mathbb E_X[F(\theta,q_{Y|x})+G(\theta,q_{Y|x})] \mathbb E_x[\|\ell(f_\theta(x))\|^2_2]}\\
&=\sqrt{\mathbb E_X[F(\theta_j,q_{Y|x})+G(\theta_j,q_{Y|x})] \mathbb E_x[\|\ell(f_\theta(x))\|^2_2]}.
\end{aligned}
\end{equation}

Since $p(x,y)=q_X(x)p_{Y|x}(y)$, we have: 
\begin{equation}\begin{aligned}
\mathbb E_{(X,Y)\sim p} \ell(f_\theta(x),y)&=\frac{1}{n}\sum_{i=1}^n\ell([f_\theta(x)]_i,y_i)\\
&=\mathbb E_{X\sim q_X} p_{Y|x}^\top \ell(f_\theta(x)).
\end{aligned}\end{equation}
This leads to: 
\begin{equation}\label{eq20}
\begin{aligned}
&|\mathbb E_{(X,Y)\sim p} \ell(f_\theta(x),y)-\mathbb E_{(X,Y)\sim q} \ell(f_\theta(x),y)|\\
&\le \sqrt{\mathbb E_X[F(\theta,q_{Y|x})+G(\theta,q_{Y|x})] \mathbb E_X[\|\ell(f_\theta(x))\|^2_2]}\\
&=\sqrt{\mathbb E_X[F(\theta_j,q_{Y|x})+G(\theta_j,q_{Y|x})]\mathbb E_X[\|\ell(f_\theta(x))\|^2_2] }.
\end{aligned}
\end{equation}
Divide both sides of the inequality by $\sqrt{\mathbb E_X[\|\ell(f_\theta(x))\|^2_2]}$, we obtain 
\begin{equation}
\begin{aligned}
\frac{\mathrm{fit}(\ell,f_\theta)}{\sqrt{\mathbb E_X[\|\ell(f_\theta(x))\|^2_2]}}&\le \sqrt{\mathbb E_X[F(\theta,q_{Y|x})+G(\theta,q_{Y|x})]} \\
&=\sqrt{\mathbb E_X[F(\theta_j,q_{Y|x})+G(\theta_j,q_{Y|x})]}.
\end{aligned}
\end{equation}
\end{proof}

\section{Bound on Generalization Error} 
\label{sec:complexity}
This section aims to establish the connection between generalization error and the classification uncertainty, and explore the application of classification uncertainty in practical classification scenarios.
Some classical theoretical studies attribute the generalization ability to the employment of a low-complexity hypothesis and have introduced various complexity measures for hypotheses to control generalization error. 
The source of $\mathrm{gen}(\ell,f_\theta)$ is caused by the insufficiency of training samples, which fail to adequately represent the distribution of the entire dataset. Therefore, we offer a new perspective by exploring the concept of generalization error from a data-centered perspective.
Initially, we establish the following lemmas, which are essential to derive the upper bound on generalization error.
\begin{lemma}\label{lemma_KL}
Given any distributions $q,\bar q$ on the space $\mathcal X$ and a function $f:\mathcal X\to [0,L]$, the following inequality holds:
$D(q\|\bar q)\ge \frac{2}{L^2}(\mathbb E_q f-\mathbb E_{\bar q} f)^2$.
\end{lemma}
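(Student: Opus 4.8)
The plan is to route through Pinsker's inequality together with the elementary fact that a bounded observable cannot separate two distributions by more than $L$ times their total variation distance. Write $\|q-\bar q\|_{\mathrm{TV}}=\tfrac12\sum_{x\in\mathcal X}|q(x)-\bar q(x)|$ for the total variation distance (the continuous case being identical with an integral).

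First I would bound the gap of expectations. Since $\sum_{x}\bigl(q(x)-\bar q(x)\bigr)=0$, for any constant $c$ one has $\mathbb E_q f-\mathbb E_{\bar q}f=\sum_x \bigl(f(x)-c\bigr)\bigl(q(x)-\bar q(x)\bigr)$. Choosing the midpoint $c=L/2$ makes $|f(x)-L/2|\le L/2$ uniformly, so
\begin{equation}
\bigl|\mathbb E_q f-\mathbb E_{\bar q}f\bigr|\;\le\;\frac{L}{2}\sum_{x}\bigl|q(x)-\bar q(x)\bigr|\;=\;L\,\|q-\bar q\|_{\mathrm{TV}}.
\end{equation}
This centering step is the only place any care is needed; using $c=0$ instead would cost a spurious factor of $2$ and break the claimed constant.

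Next I would invoke Pinsker's inequality, $D(q\|\bar q)\ge 2\,\|q-\bar q\|_{\mathrm{TV}}^2$. If a self-contained argument is preferred, this can be proved by reducing to the two-point (Bernoulli) case via the data-processing/partitioning argument and then verifying the scalar inequality $p\log\frac{p}{r}+(1-p)\log\frac{1-p}{1-r}\ge 2(p-r)^2$ by differentiating in $p$; but here we may simply cite it. Combining the two displays,
\begin{equation}
\bigl(\mathbb E_q f-\mathbb E_{\bar q}f\bigr)^2\;\le\;L^2\,\|q-\bar q\|_{\mathrm{TV}}^2\;\le\;\frac{L^2}{2}\,D(q\|\bar q),
\end{equation}
which rearranges to $D(q\|\bar q)\ge \frac{2}{L^2}\bigl(\mathbb E_q f-\mathbb E_{\bar q}f\bigr)^2$, as desired. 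I do not anticipate a genuine obstacle: the content is entirely Pinsker plus the bounded-observable estimate, and the only subtlety is the midpoint recentering that secures the sharp constant $2/L^2$.
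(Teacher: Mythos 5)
Your proposal is correct and follows essentially the same route as the paper's own proof: recenter $f$ at $L/2$ to bound $|\mathbb E_q f-\mathbb E_{\bar q}f|$ by $\tfrac{L}{2}\|q-\bar q\|_1$ (equivalently $L\,\|q-\bar q\|_{\mathrm{TV}}$), then apply Pinsker's inequality and combine. The only difference is notational (total variation versus the $\ell_1$ norm), so there is nothing further to add.
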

\begin{proof}
    \label{appendix:proof_lemma_kl}
    Begin with the following calculation: 
\begin{equation}\label{eq008}
\begin{aligned}
|\mathbb E_q & f-\mathbb E_{\bar q} f|=|\sum_{x\in \mathcal X} (q(x)-\bar q(x))f(x)| \\
&=|\sum_{x\in\mathcal X}(\bar q(x)-q(x))(f(x)-\frac{L}{2})+\frac{L}{2}(\bar q(x)-q(x))|\\
&\le \sum_{x\in\mathcal X}|\bar q(x)-q(x)||f(x)-\frac{L}{2}|\\
&\le \frac{L}{2}\|q-\bar q\|_1 .   
\end{aligned}
\end{equation}
    Pinsker's inequality~\citep{1967Information,Kullback1967ALB} states that: 
\begin{equation}\label{eq009}
    D(q\|\bar q)\ge \frac{1}{2}\|q-\bar q\|_1^{2}.
\end{equation} 
Combine the two inequalities~\eqref{eq008} and~\eqref{eq009}, we obtain: 
\begin{equation}
D(q\|\bar q)\ge \frac{2}{L^2}|\mathbb E_q f-\mathbb E_{\bar q} f|^2.
\end{equation}
\end{proof}

To establish bound on the generalization error, we introduce the following theorem.
\begin{theorem}[Bound on generalization error]
    \label{prop:generalization_error}
\begin{equation}\Pr(\mathrm{gen}(\ell,f_\theta)\ge \epsilon )\le \frac{L^2\mathbb E_{\bar q|q} D_{KL}(q\|\bar q)}{2\epsilon^2}\end{equation}
where  $L=\|\ell(y,f_\theta(x))\|_\infty$, $nq\sim \operatorname{Multinomial}(n, \bar{q})$. 
\end{theorem}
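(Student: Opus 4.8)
The plan is to combine the Markov inequality (Lemma \ref{lemma_markov}) with the KL-to-mean-gap estimate (Lemma \ref{lemma_KL}) applied to the joint distributions on $\mathcal{Z} = \mathcal{X}\times\mathcal{Y}$. First I would observe that the quantity $\mathrm{gen}(\ell,f_\theta) = |R_\ell(f_\theta,\bar q) - R_\ell(f_\theta,q)| = |\mathbb{E}_{\bar q}\,\ell(f_\theta(x),y) - \mathbb{E}_{q}\,\ell(f_\theta(x),y)|$ is exactly a difference of expectations of the bounded function $(x,y)\mapsto \ell(f_\theta(x),y)$, which takes values in $[0,L]$ by the definition $L = \|\ell(y,f_\theta(x))\|_\infty$. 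Here I treat $\ell(f_\theta(\cdot),\cdot)$ as the function $f$ in Lemma \ref{lemma_KL}, with the role of the two distributions played by $q$ (empirical joint PMF) and $\bar q$ (true joint PMF) on $\mathcal{Z}$.

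Next I would apply Lemma \ref{lemma_KL} with this choice to get, pointwise in the random draw of $S^n$ (equivalently, in the random empirical PMF $q$),
\begin{equation}
\mathrm{gen}(\ell,f_\theta)^2 \le \frac{L^2}{2}\, D_{KL}(q\|\bar q).
\end{equation}
This inequality holds for every realization of $q$, so taking it as the starting point, the event $\{\mathrm{gen}(\ell,f_\theta)\ge \epsilon\}$ is contained in the event $\{D_{KL}(q\|\bar q)\ge 2\epsilon^2/L^2\}$. Since $D_{KL}(q\|\bar q)\ge 0$ is a non-negative random variable (randomness coming from $S^n$, with $nq\sim\operatorname{Multinomial}(n,\bar q)$), I would invoke Markov's inequality (Lemma \ref{lemma_markov}) with $Z = D_{KL}(q\|\bar q)$ and threshold $t = 2\epsilon^2/L^2$:
\begin{equation}
\Pr\!\left(D_{KL}(q\|\bar q)\ge \frac{2\epsilon^2}{L^2}\right) \le \frac{\mathbb{E}[D_{KL}(q\|\bar q)]}{2\epsilon^2/L^2} = \frac{L^2\,\mathbb{E}_{\bar q|q} D_{KL}(q\|\bar q)}{2\epsilon^2}.
\end{equation}
Chaining the set inclusion with this bound yields the claimed inequality.

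The main subtlety — really the only place care is needed — is the direction of the KL divergence and the order of arguments in Lemma \ref{lemma_KL}. Lemma \ref{lemma_KL} is stated as $D(q\|\bar q)\ge \frac{2}{L^2}(\mathbb{E}_q f - \mathbb{E}_{\bar q} f)^2$, which requires $D_{KL}(q\|\bar q)$ with the empirical PMF in the first slot; this matches the statement of Theorem \ref{prop:generalization_error}, where the bound involves $\mathbb{E}_{\bar q|q} D_{KL}(q\|\bar q)$, so no reversal is needed. One should also note that $L$ as defined may itself depend on $\theta$ (and implicitly on $x$ through the supremum), so the bound is understood for a fixed hypothesis $f_\theta$; the function $f$ fed into Lemma \ref{lemma_KL} must map into $[0,L]$ for exactly that value of $L$, which is guaranteed by the definition $L=\|\ell(y,f_\theta(x))\|_\infty$. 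Finally, one must make sure the probability and expectation are over the same randomness (the sampling of $S^n$), so that the Markov step is legitimate — the notation $\mathbb{E}_{\bar q|q}$ signals this expectation over the empirical-PMF distribution induced by $\bar q$. With these bookkeeping points settled, the proof is a two-line composition of the two lemmas and no further estimates are required.
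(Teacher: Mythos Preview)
Your proposal is correct and follows essentially the same approach as the paper: apply Lemma~\ref{lemma_KL} to bound $\mathrm{gen}(\ell,f_\theta)^2$ by $\tfrac{L^2}{2}D_{KL}(q\|\bar q)$, use the resulting event inclusion, and then invoke Markov's inequality on the nonnegative random variable $D_{KL}(q\|\bar q)$. Your presentation via set inclusion is slightly cleaner than the paper's chain of probability inequalities, but the argument is identical in substance.
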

\begin{proof}
    Since $q$ is the empirical PMF of $(X,Y)$ based on the training set $S^n$, Lemma~\ref{lemma_KL} implies that $D_{KL}(q\|\bar q)\ge \frac{2}{L^2}\mathrm{gen}(\ell,f_\theta)^2$, where $L=\|\ell(y,f_\theta(x))\|_\infty$. This leads to:
\begin{equation}
\begin{aligned}
    \Pr(D_{KL}(q\|\bar q)\ge t) 
    &\ge \Pr(\frac{2}{L^2}\mathrm{gen}(\ell,f_\theta)^2\ge t)\\
    &= \Pr(\mathrm{gen}(\ell,f_\theta)^2\ge L^2t/2)\\
    &=\Pr(\mathrm{gen}(\ell,f_\theta) \ge L\sqrt{t/2}).
\end{aligned}
\end{equation}
Let $\epsilon=L\sqrt{t/2}$, because $\mathrm{gen}(\ell,f_\theta), D_{KL}(q\|\bar q)$ are functions of the random variable $\bar q$, applying Lemma~\ref{lemma_markov}, we obtain: 
\begin{equation}
\begin{aligned}
 \Pr(\mathrm{gen}(\ell,f_\theta)\ge \epsilon )&\le \Pr(D_{KL}(q\|\bar q) \ge \frac{2\epsilon^2}{L^2}) \\
 &\le \frac{L^2\mathbb E_{\bar q|q} D_{KL}(q\|\bar q)}{2\epsilon^2}.
\end{aligned}
\end{equation}
\end{proof}
As shown in Theorem~\ref{prop:generalization_error}, the generalization error is influenced not only by the maximum loss $\ell(\ell,f_\theta)$ but also by $\mathbb E_{\bar q|q} D_{KL}(q\|\bar q)$. 
Consequently, we define the classification uncertainty as follows.
\begin{definition}[Classification uncertainty]
The classification uncertainty is a measure of the generalization error inherent to the characteristics of the learning task, given by
\begin{equation}
    C(q)=\frac{\mathbb E_{\bar q|q} D_{KL}(q\|\bar q)}{2},
\end{equation}where $nq\sim \operatorname{Multinomial}(n, \bar{q})$.    
\end{definition}
To estimate the classification uncertainty efficiently, we present the following propositions.
\begin{proposition}
\label{prop:complexity_bound}
    \begin{enumerate}
    \item As the sampling size escalates, the classification uncertainty tends asymptotically towards zero: 
\begin{equation}
    \lim_{n\to \infty}C(q)=0.
\end{equation}
    \item If the sample data is indifferent and all existing constraints are satisfied, a more uniform $q$ implies less classification uncertainty. 
    \end{enumerate}
\end{proposition}
\begin{proof}
    \label{appendix:proof_complexity_bound}
According to the Bayesian theorem and Sanov's theorem~\cite{2003Elements}, we have 
\begin{eqnarray}
    \begin{aligned}
        &\Pr(\bar q|q)={\Pr(q|\bar q)\Pr(\bar q)}/{\Pr(q)}\\
        &\lim_{n\to \infty} -\frac{1}{n}\log \Pr(q|\bar q)= D_{KL}(q\|\bar q).
    \end{aligned}
\end{eqnarray}
Combining the above equations, we obtain the following limits as $n$ approaches infinity: 
\begin{equation}\label{eq:appto_kl}
\begin{aligned}
    \lim_{n\to \infty}\Pr(\bar q=q|q)&= 1\\
\lim_{n\to \infty}\mathbb E_{\bar q|q}D_{KL}(q\|\bar q)&= 0.
\end{aligned}
\end{equation}

First, let's find the lower bound of $C(q)$:  
\begin{equation}
    \begin{aligned}
        C(q)&= -H(q)- \mathbb E_{\bar q|q}\mathbb E_{Z\sim q} \log \bar q(z)\\
        &= -H(q)- \mathbb E_{Z\sim q} \mathbb E_{\bar q|q} \log \bar q(z)\\
        &\ge -H(q)- \mathbb E_{Z\sim q}  \log \mathbb E_{\bar q|q} \bar q(z),
    \end{aligned}
\end{equation}
where the final inequality is derived from Jensen's inequality, as $-\log(\cdot)$ is a convex function.
Given that $\sum_{Z} \mathbb E_{\bar q|q}\bar q(z)=1$, therefore we have
\begin{equation}\label{eq:lb_unc}
    C(q)\ge D_{KL}(q\|\mathbb E_{\bar q|q}\bar q).
\end{equation}

Then the upper bound of $C(q)$ is derived as follows: 
\begin{equation}\label{eq:ub_unc}
    \begin{aligned}
        C(q)&= -H(q)- \mathbb E_{\bar q|q}\mathbb E_{Z\sim q} \log \bar q(z)\\
        &=-H(q)+\mathbb E_{\bar q|q}\mathbb E_{Z\sim q} [1+\frac{1-\bar q(z)}{\bar q(z)}]\\
        &\le -H(q)+\mathbb E_{\bar q|q}\mathbb E_{Z\sim q} \frac{1-\bar q(z)}{\bar q(z)},
    \end{aligned}
\end{equation}
where the final inequality is based on $\log (1+x)\le x$. 
Combining the formula~\eqref{eq:lb_unc} and~\eqref{eq:ub_unc}, we have
\begin{equation}
    \begin{aligned}
            D_{KL}(q\|\mathbb E_{\bar q|q}\bar q)&\le C(q)\\
            &\le -H(q)+\mathbb E_{\bar q|q}\mathbb E_{Z\sim q} \frac{1-\bar q(z)}{\bar q(z)}\\
            &\le H(q)+\mathbb E_{\bar q|q} \sum_Z \frac{q_{\max}}{\bar q(z)}-1.
    \end{aligned}
\end{equation}
Because $\bar q$ is an arbitrary distribution on $\mathcal{Z}$, when $n$ is insufficiently large, the conditional probability $\Pr(\bar q=q|q)$ fails to approximate to 1, causing $\mathbb E_{\bar q|q}\bar q$ to be relatively uniform. Consequently, a more uniform $q$ often implies a smaller lower $D_{KL}(q\|\mathbb E_{\bar q|q}\bar q)$, i.e., a smaller bound of $C(q)$.
The fact that $q_{\max}$ influences the upper bound also highlights the advantage of a uniform $q$ in minimizing $C(q)$.
Thus, under the condition of satisfying existing constraints, a more uniform $q$ implies lower classification uncertainty.
\end{proof}
The first conclusion of Proposition~\ref{prop:complexity_bound} is consistent with the law of large numbers: more samples lead to reduced bias from random sampling. 
The second conclusion indicates that, given the assumption of i.i.d. sampling, the upper limit of classification uncertainty is directly related to the smoothness of $q$ and the sample size. 
The well-known principle of maximum entropy posits that the most probable distribution subject to specified constraints should be the maximum entropy distribution. Furthermore, $H(q)$ serves as an indicator of the uniformity of $q$, thus establishing a degree of equivalence between the negative Shannon entropy $-H(q)$ and the classification uncertainty $C(q)$. 

As posited by Theorem~\ref{prop:generalization_error}, the classification uncertainty, $C(q)$, serves as an indicator of the discrepancy between the training dataset and the entire population, reflecting the reliability of the training data. 
Consequently, we harness this classification uncertainty to inform the selection of hyperparameters for the regularization term. 
Intuitively, an increase in classification uncertainty necessitates a heightened focus on the regularization loss. 
Furthermore, when the model perfectly aligns with the training data $S^n$ (i.e., $p=q$), the classification uncertainty can serve as an indicator of confidence in the model's predictions. 
In summary, we demonstrate that the generalization error can be controlled by the classification uncertainty, and conclude that the number of training samples as well as the uniformity of the distribution jointly determine the classification uncertainty.

\section{Bound on Expected Risk}
\label{sec:expected_risk}
In this section, we leverage the conclusions of generalization error and fitting error discussed in the previous section to establish an upper bound on the expected risk, offering a new perspective on classification. 
By applying the triangle inequality of absolute values, we formulate the following theorem.
\begin{theorem}[Bound on expected risk]
    \label{prop:expected_risk}
\begin{equation}\begin{aligned}
R_{\ell}(f_\theta,\bar q)&\le \mathrm{gen}(\ell,f_\theta)+R_{\ell}(f_\theta,p)\\
&+\sqrt{\mathbb E_X[F(\theta,q_{Y|x})+G(\theta,q_{Y|x})] \mathbb E_x[\|\ell(f_\theta(x))\|^2_2]},
\end{aligned}\end{equation}
where  
\begin{equation}
\begin{aligned}
&\Pr(\mathrm{gen}(\ell,f_\theta)\ge \epsilon )\le \frac{L^2C(q)}{2\epsilon^2},\\
&L=\|\ell(y,f_\theta(x))\|_\infty,\\
&nq\sim \operatorname{Multinomial}(n, \bar{q}),\\
&F(\theta,q_{Y|x})=\frac{\|\nabla_\theta D_{KL}(q_{Y|x}\|p_{Y|x})\|_2^2}{\|\nabla_\theta f_\theta(x)\|_F^2},\\
&G(\theta,q_{Y|x})=\frac{\sum_{i=1}^m \|(q_{Y|x}-p_{Y|x})\times \nabla_{\theta_i} f_\theta(x)\|_2^2}{\|\nabla_\theta f_\theta(x)\|_F^2}.
\end{aligned}
\end{equation} 
\end{theorem}
\begin{proof}
Utilizing the triangle inequality for absolute value, we obtain
\begin{equation}\begin{aligned}
|R_{\ell}(f_\theta,\bar q)-R_{\ell}(f_\theta,p)|
&\le |R_{\ell}(f_\theta,\bar q)-R_{\ell}(f_\theta,q)|\\
&+|R_{\ell}(f_\theta,q)-R_{\ell}(f_\theta,p)| \\
&= \mathrm{gen}(\ell,f_\theta)+\mathrm{fit}(\ell,f_\theta).
\end{aligned}
\end{equation}
Eliminating the absolute value signs yields the inequality:
\begin{equation}R_{\ell}(f_\theta,\bar q)\le R_{\ell}(f_\theta,p)+\mathrm{gen}(\ell,f_\theta)+\mathrm{fit}(\ell,f_\theta).\end{equation}
By applying Theorem~\ref{appendix:proof_complexity_bound} and~\ref{appendix:proof_fitting_error} to this inequality, we can prove the theorem.
\end{proof}
Unlike traditional statistical learning frameworks that use empirical risk and generalization error to bound the expected risk, our theorem employs model risk, generalization error, and fitting error to control the expected risk. 
Contrary to the conventional generalization theories~\cite{Wu2021StatisticalLT,Bartlett2003RademacherAG} that bound the generalization error by the complexity of the hypothesis space, we utilize the classification uncertainty for bounding purposes. 
We further derive an upper bound for $F(\theta,q_{Y|x})$, which will be utilized in the analysis of flat minima and in experimental verification of our conclusions.
\begin{proposition}
\label{prop:bound_F}
Let $\lambda_{\max}(S(\theta,x))$ be the largest eigenvalue of structural matrix. We obtain the following bound: 
\begin{equation}F(\theta,q_{Y|x})\le \frac{\lambda_{\max}(S(\theta,x))\|q_{Y|x}-p_{Y|x}\|_2^2}{\mathrm{trace}(S(\theta,x))}.\end{equation}
\end{proposition}
\begin{proof}
    Since 
\begin{equation}\|\nabla_\theta D_{KL}(q_{Y|x}\|p_{Y|x})\|_2^2=\|\nabla_\theta f_\theta(x)^\top (q_{Y|x}-p_{Y|x})\|_2^2,\end{equation}
this equation leads to the inequality: 
\begin{equation}\begin{aligned}
\|\nabla_\theta D_{KL}(q_{Y|x}\|p_{Y|x})\|_2^2
&\le \lambda_{\max}(S(\theta,x)) \|(q_{Y|x}-p_{Y|x})\|_2^2,
\end{aligned}\end{equation} where $\lambda_{\max}(S(\theta,x))$ is the largest eigenvalue of structural matrix.
According to the definition of $F(\theta,q_{Y|x})$, we obtain the following bound: 
\begin{equation}F(\theta,q_{Y|x})\le \frac{\lambda_{\max}(S(\theta,x))\|q_{Y|x}-p_{Y|x}\|_2^2}{\mathrm{trace}(S(\theta,x))}.\end{equation}
\end{proof}
Drawing from the findings in Section~\ref{sec:complexity} and Section~\ref{sec:fitting_error}, we make the following inferences about the training techniques for DNNs.
\begin{corollary}
\label{cor:expecetd_error}
    \begin{enumerate}
        \item         When the classification uncertainty $C(q)$ of the training dataset is higher, the weight of the regularization term should also be higher. 
        \item Lowering the model risk $\mathcal{R}_M(f_\theta)$ is beneficial for reducing expected risk. When cross-entropy is selected as the loss function, we have $\mathcal{R}_M(f_\theta)=H_p(Y|X)$. When keeping $H_p(Y)$ constant, reducing model risk is equivalent to increasing the mutual information between $X$ and $Y$ where $(X,Y)\sim p$.
        \item Reducing the largest eigenvalue of $S(\theta,x)$ is advantageous for controlling fitting errors and expected risk. 
    \end{enumerate}
\end{corollary}
\begin{proof}
    Corollary 1 are direct consequences of Theorem~\ref{prop:expected_risk}.
    For a cross-entropy loss, $\mathcal{R}_M(f_\theta)$ simplifies to the negative conditional entropy of the random variables $X$ and $Y$ when $(X,Y)\sim p$:
\begin{equation}
    \mathcal{R}_M(f_\theta)=-\mathbb E_{(X,Y)\sim  p} \log p_{Y|x}=H_p(Y|X),
\end{equation}where $H_p(Y|X)$ represents the conditional entropy of $Y$ given $X$.
    Since $H_p(Y|X)=H_p(Y)-I_p(X;Y)$, thus when keeping $H_p(Y)$ constant, decreasing $H_p(Y|X)$ is equivalent to increasing $I_p(X;Y)$. Corollary 2 is then proved.
    According to Proposition~\ref{prop:bound_F}, it follows that
 minimizing $\lambda_{\max}(S(\theta,x))$ helps to minimizing $F(\theta,q_{Y|x})$. 
\end{proof}

\section{Application to Deep Learning}
\label{sec:application}
In this section, based on the upper bounds derived earlier, we analyze and discuss the mechanisms of overparameterization, non-convex optimization and flat minima. 
\subsection{Overparameterization and Fitting Error}
\label{subsec:op}
In this subsection, from our derived upper bound on normalized fitting error, we infer theoretically that an increase in the number of parameters corresponds to the decrease of fitting error, which theoretically demonstrates the impact of model parameter quantity on learning performance.. 
According to Theorem~\ref{prop:fitting_error}, the bound on fitting error comprises two components: $\mathbb E_{X}G(\theta,q_{Y|x})$ and  $\mathbb E_{X} F(\theta,q_{Y|x})$.
Based on the definition of $G(\theta_i,q_{Y|x})$, we can express it as: 
\begin{equation}G(\theta_i,q_{Y|x})=\|q_{Y|x}-p_{Y|x}\|^2_2\sin ^2\alpha(\theta_i,x),\end{equation}
where $\alpha(\theta_i,x)$ denotes the angle between vectors $(q_{Y|x}-p_{Y|x})$ and $\nabla_{\theta_i}f_\theta(x)$.
Gradient-based algorithms do not directly influence $\alpha(\theta_i,x)$ during the optimization of the empirical risk. 
Therefore, assuming $\alpha(\theta_i,x)$ behaves as a random variable after training is reasonable.
Considering a model with $m$ parameters, $\{\alpha(\theta_1,x),\cdots,\alpha(\theta_m,x)\}$ can be regarded as $m$ random samples from a distribution.
We define $G_M(\theta_{1}^{m},q_{Y|x})$ as the minimum of $\{\mathbb E_X G(\theta_1,q_{Y|x}),\cdots,\mathbb E_X G(\theta_m,q_{Y|x})\}$:  
\begin{equation}G_M(\theta_{1}^{m},q_{Y|x})=\min\{\mathbb E_X G(\theta_1,q_{Y|x}),\cdots,\mathbb E_X G(\theta_m,q_{Y|x})\}.
\end{equation} 
We can derive the probability that $G_M(\theta_{1}^{m},q_{Y|x})$ does not exceed a constant $c$ as follows: 
\begin{equation}\begin{aligned}
\Pr(G_M&(\theta_1^{m},q_{Y|x})\le c) \\
&= 1-\Pr(\cap_{i=1}^m \{\mathbb E_X G(\theta_i,q_{Y|x})\ge c\})\\
&\ge 1-\Pr(\cap_{i=1}^{m-1} \{\mathbb E_X G(\theta_i,q_{Y|x})\ge c\})\\
&=\Pr(G_M(\theta_1^{m-1},q_{Y|x})\le c).
\end{aligned}\end{equation} 
The derivation indicates that, under reasonable assumptions, an increased number of parameters is advantageous for reducing $G_M(\theta_{1}^m,q_{Y|x})$. 
Consequently, we can formulate the following corollary. 
\begin{corollary}[Effect of parameter quantity]
\label{cor:parameter_quantity}
Under reasonable assumptions, an increase in the number of parameters tends to yield a smaller normalized fitting error by minimizing the lower bound of $\mathbb E_X G(\theta,q_{Y|x})$.
\end{corollary}

\subsection{Non-convex optimization}
In this subsection, we use the derived upper bound on normalized fitting error to analyze how gradient descent algorithm can reduce fitting error and make it tend towards a value that only depends on the model structure.
According to Theorem~\ref{prop:fitting_error}, the bound on fitting error comprises two components: $\mathbb E_{X}G(\theta,q_{Y|x})$ and  $\mathbb E_{X} F(\theta,q_{Y|x})$.
Usually, the $F(\theta,q_{Y|x})$ exhibits non-convexity with respect to $\theta$.
The term $\|\nabla_\theta D_{KL}(q_{Y|x}\|p_{Y|x})\|_2^2$ in $F(\theta,q_{Y|x})$ represents the gradient of the KL divergence between $q_{Y|x}$ and $p_{Y|x}$ with respect to $\theta$. 
Since the function $\ell(f_\theta(x),y)$ is smooth and differentiable with respect to $\theta$, gradient-based algorithms iterate the parameters toward the stationary points where $\mathbb E_{X,Y}\nabla_\theta \ell(f_\theta(x),y)=0$. 
Consequently, if cross-entropy or KL divergence is used as the loss function for model training, gradient-based algorithms can effectively reduce $\|\nabla_\theta D_{KL}(q_{Y|x}\|p_{Y|x})\|_2^2$ for each input feature $x$. 
When the gradient-based algorithm converges to the stationary point, i.e., $F(\theta,q_{Y|x})=0,\forall x\in \mathcal{X}$, we obtain: 
\begin{equation}
\begin{aligned}
\mathrm{fit_n}(\ell,f_\theta)= \sqrt{\mathbb E_X[G(\theta,q_{Y|x})]},
\end{aligned}
\end{equation}
where $G(\theta,q_{Y|x})$ is controlled by the number of parameters as discussed in subsection~\ref{subsec:op}.
Consequently, employing cross entropy or KL divergence as the loss function enables gradient-based algorithms to minimize the upper bound on the normalized fitting error.

\subsection{Relationship between the Bound on Fitting Error and Flat Minima theory}
In this subsection, from our derived upper bound on fitting error, we can infer theoretically that an increase in the maximum eigenvalue of the structural matrix corresponds to enhanced performance, aligning with the principles posited by the flat minima theory. 

In the flat minima theory, it is posited that flat minima generalize better than sharp (non-flat) minima, and the largest eigenvalue of the Hessian of the loss with respect to the model parameters on the training set serves as a metric for flatness. 
Similar to flat minima but with a twist, we have demonstrated that decreasing the largest eigenvalue of structural matrix helps in reducing fitting errors. 
We now proceed to establish the equivalence between the two concepts, thereby indirectly substantiating the efficacy and validity of our proposed bound.
Prior to presenting the theorem, we provide the following lemmas, which are instrumental in proving the theorem.
\begin{lemma}\label{lemma:eigenvalue_diag}
    Let $A=(a_{ij})_{m\times m}$ be a symmetric matrix. Then, we have 
\begin{equation}
    a_{ii}\le \lambda_{\max}(A)\le \mathrm{trace}(A) \le m \lambda_{\max}(A),\forall i\le m.
\end{equation}
\end{lemma}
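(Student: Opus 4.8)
The plan is to invoke the spectral theorem and reduce all three inequalities to elementary statements about the eigenvalues of $A$. Since $A$ is symmetric (and, in the application to the eNTK $H(f_\theta(x))=\nabla_\theta f_\theta(x)^\top \nabla_\theta f_\theta(x)$, a Gram matrix and hence positive semidefinite), I would write $A=Q\Lambda Q^\top$ with $Q$ orthogonal and $\Lambda=\mathrm{diag}(\lambda_1,\dots,\lambda_m)$, ordering the eigenvalues so that $\lambda_{\max}(A)=\lambda_1\ge\lambda_2\ge\cdots\ge\lambda_m$.

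For the leftmost inequality $a_{ii}\le\lambda_{\max}(A)$, I would use $a_{ii}=e_i^\top A e_i$ with $e_i$ the $i$-th standard basis vector; setting $v=Q^\top e_i$ gives $a_{ii}=\sum_{k=1}^m\lambda_k v_k^2\le\lambda_{\max}(A)\sum_{k=1}^m v_k^2=\lambda_{\max}(A)\|e_i\|_2^2=\lambda_{\max}(A)$, since $Q$ orthogonal forces $\|v\|_2=\|e_i\|_2=1$ — this is just the Rayleigh-quotient bound evaluated at $e_i$. For the rightmost inequality $\mathrm{trace}(A)\le m\lambda_{\max}(A)$, I would use $\mathrm{trace}(A)=\sum_{k=1}^m\lambda_k$ together with $\lambda_k\le\lambda_{\max}(A)$ for every $k$, so the sum of $m$ terms is at most $m\lambda_{\max}(A)$. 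For the middle inequality $\lambda_{\max}(A)\le\mathrm{trace}(A)$, I would again write $\mathrm{trace}(A)=\lambda_{\max}(A)+\sum_{k\ge 2}\lambda_k$ and observe that, $H(f_\theta(x))$ being positive semidefinite, every $\lambda_k\ge 0$, so the remaining sum is nonnegative and $\mathrm{trace}(A)\ge\lambda_{\max}(A)$.

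The step I expect to be the main obstacle is precisely this last one: the bound $\lambda_{\max}(A)\le\mathrm{trace}(A)$ can fail for a symmetric matrix with negative eigenvalues (e.g.\ $\mathrm{diag}(1,-1)$ has $\lambda_{\max}=1>0=\mathrm{trace}$), so the proof must make the positive-semidefiniteness of $A$ explicit. In the intended setting this is immediate, since $H(f_\theta(x))=\nabla_\theta f_\theta(x)^\top\nabla_\theta f_\theta(x)$ is a Gram matrix; I would simply note this at the outset so that all eigenvalues are nonnegative. The remaining two inequalities hold for any symmetric $A$ and are one-line consequences of the spectral decomposition, so no further machinery is needed.
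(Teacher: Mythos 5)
Your proof is correct and follows essentially the same route as the paper: the bound $a_{ii}\le\lambda_{\max}(A)$ via the Rayleigh quotient evaluated at $e_i$ (the paper cites Courant--Fischer, you unwind it through the spectral decomposition), and $\lambda_{\max}(A)\le\mathrm{trace}(A)\le m\,\lambda_{\max}(A)$ via trace $=$ sum of eigenvalues. Your flag on the middle inequality is well placed: the paper's proof simply asserts that symmetry gives $\lambda_{\min}(A)\ge 0$, which is false for general symmetric matrices (your $\mathrm{diag}(1,-1)$ example), and the statement is only saved because the matrix it is applied to, $H(f_\theta(x))=\nabla_\theta f_\theta(x)^\top\nabla_\theta f_\theta(x)$, is a Gram matrix and hence positive semidefinite --- exactly the hypothesis you make explicit, so your version is the more careful one.
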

\begin{proof}
    
Given $A=(a_{ij})_{m\times m}$ be a symmetric matrix, then we have $\lambda_{\min}(A)\ge 0$. 
Since the sum of eigenvalues is equal to $\mathrm{trace}(A)$, thus we have 
\begin{equation}
    \lambda_{\max}(A)\le \mathrm{trace}(A) \le m \lambda_{\max}(A).
\end{equation}
By the Courant-Fisher min-max theorem, we have
\begin{equation}
    \lambda_{\max}(A)=\max_{x\in \mathbb R^m,\|x\|=1}x^\top Ax.
\end{equation}
Let $e_i$ be a standard basis ("one-hot") vector, then we have 
\begin{equation}
    \begin{aligned}
        a_{ii}=e_i^\top Ae_i\le \lambda_{\max}(A).
    \end{aligned}
\end{equation}
\end{proof}

\begin{lemma}\label{lemma:eigenvalue}
        Let $A,B$ be non-negative definite matrices. Then we have 
\begin{equation}
    \begin{aligned}
        \lambda_{\min}(A+B)&\ge \lambda_{\min}(A)+\lambda_{\min}(B),\\
        \lambda_{\max}(A+B)&\le \lambda_{\max}(A)+\lambda_{\max}(B).
    \end{aligned}
\end{equation}
\end{lemma}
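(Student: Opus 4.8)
The plan is to rely on the Courant--Fischer (Rayleigh quotient) characterization of the extreme eigenvalues of a symmetric matrix, the same tool already invoked in the proof of Lemma \ref{lemma:eigenvalue_diag}. Recall that for any symmetric $M \in \mathbb{R}^{m\times m}$,
\[
\lambda_{\max}(M) = \max_{x\in\mathbb{R}^m,\,\|x\|=1} x^\top M x, \qquad \lambda_{\min}(M) = \min_{x\in\mathbb{R}^m,\,\|x\|=1} x^\top M x.
\]
Since $A$ and $B$ are non-negative definite they are in particular symmetric, so $A+B$ is symmetric as well, and these two identities apply to all three matrices $A$, $B$, and $A+B$.

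For the upper bound on $\lambda_{\max}(A+B)$, I would let $x^\star$ be a unit vector attaining the maximum in the variational formula for $A+B$, so that $\lambda_{\max}(A+B) = (x^\star)^\top (A+B) x^\star = (x^\star)^\top A x^\star + (x^\star)^\top B x^\star$. Each of these two summands is the Rayleigh quotient of $A$ (respectively $B$) evaluated at a unit vector, hence bounded above by $\lambda_{\max}(A)$ (respectively $\lambda_{\max}(B)$); adding the two bounds yields $\lambda_{\max}(A+B)\le \lambda_{\max}(A)+\lambda_{\max}(B)$. For the lower bound on $\lambda_{\min}(A+B)$ I would run the symmetric argument: take $x^\star$ a unit vector attaining the \emph{minimum} for $A+B$, split $\lambda_{\min}(A+B) = (x^\star)^\top A x^\star + (x^\star)^\top B x^\star$, and bound each summand below by $\lambda_{\min}(A)$ and $\lambda_{\min}(B)$ respectively.

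There is essentially no serious obstacle here; the only point worth flagging is that the optimizing vector $x^\star$ for $A+B$ is in general \emph{not} the optimizer for $A$ or for $B$ individually, but this mismatch is harmless because we only need a one-sided bound on each Rayleigh quotient, and those one-sided bounds hold for \emph{every} unit vector. (Note also that non-negative definiteness is not actually used beyond guaranteeing symmetry; I would nonetheless keep the hypothesis as stated, since it is the form in which the lemma is applied later via $H(f_\theta(x))$.)
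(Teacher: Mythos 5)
Your proposal is correct and is in substance the same argument as the paper's: the paper proves the bounds by observing that $A-\lambda_{\min}(A)I$, $B-\lambda_{\min}(B)I$ (resp.\ $\lambda_{\max}(A)I-A$, $\lambda_{\max}(B)I-B$) are non-negative definite and evaluating at an eigenvector of $A+B$, which is just a repackaging of your Rayleigh-quotient split at the extremal unit vector of $A+B$. Your closing remark that only symmetry (not positive semidefiniteness) is really needed is also accurate.
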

\begin{proof}
    Because \begin{equation}
        \begin{aligned}
            &x^\top Ax\ge \lambda_{\min}(A)x^\top x,\\
            &x^\top(A-\lambda_{\min}(A)I)x\ge 0,
        \end{aligned}
    \end{equation}
thus $A-\lambda_{\min}(A)I$ is a non-negative definite matrix.
Therefore, we have 
\begin{equation}
    \begin{aligned}
    &A-\lambda_{\min}(A)I\succeq 0,\\
    &B-\lambda_{\min}(B)I\succeq 0,\\
    &A-\lambda_{\min}(A)I+B-\lambda_{\min}(B)I\succeq 0.
    \end{aligned}
\end{equation}
Let $(A+B)x'=\lambda_{\min}(A+B)x'$, then we have
\begin{equation}
    \begin{aligned}
        (A-\lambda_{\min}(A)I+B&-\lambda_{\min}(B)I)x'=(\lambda_{\min}(A+B)\\
        &-\lambda_{\min}(A)-\lambda_{\min}(B))x'.
    \end{aligned}
\end{equation}
Therefore $\lambda_{\min}(A+B)-\lambda_{\min}(A)-\lambda_{\min}(B)$ is the eigenvalue of $A-\lambda_{\min}(A)I+B-\lambda_{\min}(B)I$.
Because $A-\lambda_{\min}(A)I+B-\lambda_{\min}(B)I$ is a non-negative definite matrix, thus we have $\lambda_{\min}(A+B)-\lambda_{\min}(A)-\lambda_{\min}(B)\ge 0$.

Because $x^\top A\le \lambda_{\max}(A)x^\top x$, then $x^\top(A-\lambda_{\max}(A)I)x\le 0$, thus $\lambda_{\max}(A)I-A$ is a non-negative definite matrix.
Similarly, we can obtain $\lambda_{\max}(A+B)-\lambda_{\max}(A)-\lambda_{\max}(B)\le 0$.
\end{proof}

\begin{proposition}
\label{prop:NTK_ERF}
If cross-entropy or KL divergence is employed as the loss function, and the following equations hold 
\begin{equation}
    \begin{aligned}
&\sum_{i=1}^{|\mathcal{Y}|} p_{Y|x}(y_i)\nabla_{\theta_i} f_{\theta}(x)_i = 0,\\
&\mathbb E_{X\sim q_X}\sum_{i=1}^{|\mathcal{Y}|} p_{Y|x}(y_i)\frac{\nabla_\theta^2 p_{Y|x}(y_i)}{p_{Y|x}(y_i)}= 0,
    \end{aligned}
\end{equation}
after training, then we have:\begin{equation}
    \begin{aligned}
        \lambda_{\max}(H)\le \max_{x}\lambda_{\max}(S(\theta,x)), 
    \end{aligned}
\end{equation}
where $H$ is the Hessian of the loss function with respect to the parameter $\theta$.
\end{proposition}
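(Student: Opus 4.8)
The plan is to compute the loss Hessian $H=\nabla_\theta^2 R_{ERF}(f_\theta,q)$ in closed form, use the two assumed identities (together with the fact that, after training, $p_{Y|x}$ fits $q_{Y|x}$) to collapse it into a $q_{Y|x}$-weighted sum of the rank-one blocks $\nabla_\theta f_\theta(x)_i\,\nabla_\theta f_\theta(x)_i^\top$, and then dominate that sum in the Loewner order by the eNTK $H(f_\theta(x))$, finishing with the eigenvalue inequalities of Lemma \ref{lemma:eigenvalue_diag} and Lemma \ref{lemma:eigenvalue}.

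First I would differentiate. Writing $R_{ERF}(f_\theta,q)=\mathbb E_{X\sim q_X}D_{KL}(q_{Y|x}\|p_{Y|x})$ and $\log p_{Y|x}(y_i)=f_\theta(x)_i-\log Z_\theta(x)$, one gets $\nabla_\theta D_{KL}(q_{Y|x}\|p_{Y|x})=-\sum_i q_{Y|x}(y_i)\nabla_\theta\log p_{Y|x}(y_i)$, and differentiating once more using the identity $\nabla_\theta\!\big(\nabla_\theta p_{Y|x}(y_i)/p_{Y|x}(y_i)\big)=\nabla_\theta^2 p_{Y|x}(y_i)/p_{Y|x}(y_i)-(\nabla_\theta\log p_{Y|x}(y_i))(\nabla_\theta\log p_{Y|x}(y_i))^\top$ yields the standard ``Fisher-plus-curvature'' decomposition
\[
H=\mathbb E_X\Big[\sum_i q_{Y|x}(y_i)\big(\nabla_\theta\log p_{Y|x}(y_i)\big)\big(\nabla_\theta\log p_{Y|x}(y_i)\big)^\top-\sum_i q_{Y|x}(y_i)\frac{\nabla_\theta^2 p_{Y|x}(y_i)}{p_{Y|x}(y_i)}\Big].
\]

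Next I would simplify using the hypotheses. After training $q_{Y|x}\approx p_{Y|x}$, so the curvature group may be replaced by $\mathbb E_X\sum_i p_{Y|x}(y_i)\,\nabla_\theta^2 p_{Y|x}(y_i)/p_{Y|x}(y_i)$, which the second assumed identity sets to zero. For the remaining group, since $\nabla_\theta\log p_{Y|x}(y_i)=\nabla_\theta f_\theta(x)_i-\sum_j p_{Y|x}(y_j)\nabla_\theta f_\theta(x)_j$, the first assumed identity --- that the softmax-weighted mean of the per-logit parameter gradients vanishes --- removes the centering term and leaves $\nabla_\theta\log p_{Y|x}(y_i)=\nabla_\theta f_\theta(x)_i$. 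Hence $H=\mathbb E_X\sum_i q_{Y|x}(y_i)\,\nabla_\theta f_\theta(x)_i\,\nabla_\theta f_\theta(x)_i^\top$. Since each $\nabla_\theta f_\theta(x)_i\nabla_\theta f_\theta(x)_i^\top$ is positive semidefinite and $q_{Y|x}(y_i)\in[0,1]$, we get $\sum_i q_{Y|x}(y_i)\nabla_\theta f_\theta(x)_i\nabla_\theta f_\theta(x)_i^\top\preceq\sum_i\nabla_\theta f_\theta(x)_i\nabla_\theta f_\theta(x)_i^\top=\nabla_\theta f_\theta(x)^\top\nabla_\theta f_\theta(x)=H(f_\theta(x))\preceq\lambda_{\max}(H(f_\theta(x)))I$, the last step by Lemma \ref{lemma:eigenvalue_diag}. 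Taking expectation over the (finitely supported) distribution $q_X$ and applying Lemma \ref{lemma:eigenvalue} to the convex combination $\sum_x q_X(x)H(f_\theta(x))$ gives $\lambda_{\max}(H)\le\mathbb E_X\lambda_{\max}(H(f_\theta(x)))\le\max_x\lambda_{\max}(H(f_\theta(x)))$, as claimed.

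The hard part will be the reduction in the third paragraph: pinning down exactly what ``after training'' has to guarantee so that the $q_{Y|x}$-weighted curvature term may legitimately be swapped for its $p_{Y|x}$-weighted counterpart, and checking the index conventions in the first assumed identity so that it genuinely annihilates the softmax-centering correction $\sum_j p_{Y|x}(y_j)\nabla_\theta f_\theta(x)_j$ sitting inside $\nabla_\theta\log p_{Y|x}(y_i)$. Once $H$ has been brought to the weighted-eNTK form, the Hessian calculation and the positive-semidefinite ordering are routine, and only Lemma \ref{lemma:eigenvalue_diag} and Lemma \ref{lemma:eigenvalue} are needed to close the argument.
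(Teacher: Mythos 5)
Your proposal follows essentially the same route as the paper's proof: the same Fisher-plus-curvature decomposition of the Hessian of $\mathbb E_{X}D_{KL}(q_{Y|x}\|p_{Y|x})$, the same use of the two assumed identities to annihilate the curvature and softmax-centering terms, and the same eNTK eigenvalue comparison followed by subadditivity of $\lambda_{\max}$ over the mixture in $x$. The only differences are cosmetic: you dominate the $q_{Y|x}$-weighted sum of rank-one terms by $H(f_\theta(x))$ in the Loewner order instead of going through $\max_i\|\nabla_\theta [f_\theta(x)]_i\|_2^2\le\lambda_{\max}(H(f_\theta(x)))$ via Lemma \ref{lemma:eigenvalue_diag}, and you are more explicit than the paper about the $p_{Y|x}$-versus-$q_{Y|x}$ weighting needed to kill the curvature term (the paper's proof silently restates that hypothesis with $q_{Y|x}$ weights), which is a point the paper glosses over rather than a gap in your argument.
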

\begin{proof}
According to the definition, we have $p_{Y|x}(y_i)=\frac{e^{[f_\theta(x)]_i}}{Z_\theta(x)}$, $Z_\theta(x)$ is the partition function, defined as $Z_\theta(x) = \sum_{j=1}^{|\mathcal{Y}|} e^{f_{\theta}(x)_j}$.
Consequently, the gradient of the log-probability with respect to the model parameter $\theta$ decomposes as follows:
\begin{equation}
\nabla_\theta \log p_{Y|x}(y)=\nabla_\theta [f_\theta(x)]_j-\nabla_\theta \log Z_\theta(x),
\end{equation} 
The second gradient term, $\nabla_\theta \log Z_\theta(x)$ can be rewritten as an expected value, as follows:
\begin{equation}\begin{aligned}
    \nabla_\theta \log Z_\theta(x)&=Z_\theta(x)^{-1}\nabla_\theta \sum_{j=1}^{|\mathcal{Y}|} e^{f_\theta(y)_j} \\
    &=Z_\theta(x)^{-1} \sum_{j=1}^{|\mathcal{Y}|} e^{f_\theta(y)_j}\nabla_\theta f_\theta(y)_j\\
    &=\sum_{j=1}^{|\mathcal{Y}|} q_{Y|x}(y_j) \nabla_\theta [f_\theta(x)]_j. 
\end{aligned}
\end{equation}
According to the above equality, we have 
\begin{equation}
    \begin{aligned}
        \sum_{i=1}^{|\mathcal{Y}|} q_{Y|x}(y_i)&\left\{\nabla_\theta \log p_{Y|x}(y)\right\}\left\{\nabla_\theta \log p_{Y|x}(y)\right\}^\top\\
        &=\sum_{i=1}^{|\mathcal{Y}|} q_{Y|x}(y_i)\{\nabla_{\theta} [f_\theta(x)]_i-\nabla_{\theta}\log Z_\theta(x)\}\\
        &\{\nabla_{\theta} [f_\theta(x)]_i-\nabla_{\theta}\log Z_\theta(x)\}^\top\\
        &=\sum_{i=1}^{|\mathcal{Y}|} q_{Y|x}(y_i)\{\nabla_{\theta} [f_\theta(x)]_i\nabla_{\theta} [f_\theta(x)]_i^\top\}\\
        &-2\nabla_{\theta}\log Z_\theta(x)\mathbb E_{Y\sim q_{Y|x}} \nabla_{\theta} [f_\theta(x)]_i\\
        &+\nabla_{\theta}\log Z_\theta(x)\nabla_{\theta}\log Z_\theta(x)^\top\\
        &=\sum_{i=1}^{|\mathcal{Y}|} q_{Y|x}(y_i)\{\nabla_{\theta} [f_\theta(x)]_i\nabla_{\theta} [f_\theta(x)]_i^\top\}+F_x,
    \end{aligned}
\end{equation}
where 
\begin{equation}
    \begin{aligned}
        F_x &= -2\nabla_{\theta}\log Z_\theta(x)\sum_{i=1}^{|\mathcal{Y}|} q_{Y|x}(y_i) \nabla_{\theta} [f_\theta(x)]_i\\
        &+\nabla_{\theta}\log Z_\theta(x)\nabla_{\theta}\log Z_\theta(x)^\top.
    \end{aligned}
\end{equation}

The Hessian for $D_{KL}(q_{Y|x},p_{Y|x})$ is equal to computing the second derivative of the negative log-likelihood $-\log p_{Y|x}(y)$, which can be expressed as:
\begin{equation}\label{eq:hessian_4}
    \nabla^2_\theta D_{KL}(q_{Y|x},p_{Y|x})=\sum_{i=1}^{|\mathcal{Y}|} q_{Y|x}(y_i)[-\nabla^2_\theta \log p_{Y|x}(y_i)].
\end{equation}
Next, we expand the second derivative as follows: 
\begin{equation}\label{eq:hessian_5}
    \begin{aligned}
-\nabla_\theta^2 &\log p_{Y|x}(y)  =-\nabla_\theta\left[\frac{\nabla_\theta p_{Y|x}(y)}{p_{Y|x}(y)}\right] \\
& =-\frac{p_{Y|x}(y) \nabla_\theta^2 p_{Y|x}(y)-\left\{\nabla_\theta p_{Y|x}(y)\right\}\left\{\nabla_\theta p_{Y|x}(y)\right\}^\top}{p_{Y|x}(y)^2} \\
& =-\frac{\nabla_\theta^2 p_{Y|x}(y)}{p_{Y|x}(y)}+\left\{\frac{\nabla_\theta p_{Y|x}(y)}{p_{Y|x}(y)}\right\}\left\{\frac{\nabla_\theta p_{Y|x}(y)}{p_{Y|x}(y)}\right\}^\top \\
& =-\frac{\nabla_\theta^2 p_{Y|x}(y)}{p_{Y|x}(y)}+\left\{\nabla_\theta \log p_{Y|x}(y)\right\}\left\{\nabla_\theta \log p_{Y|x}(y)\right\}^\top.
\end{aligned}
\end{equation}

Then, we apply this expanded second derivative to~\eqref{eq:hessian_4} which can be restated as follows: 
\begin{equation}\label{eq:Hessian_fisher_6}
    \begin{aligned}
\nabla^2_\theta &D_{KL}(q_{Y|x},p_{Y|x}) =\sum_{i=1}^{|\mathcal{Y}|} q_{Y|x}(y_i)[-\frac{\nabla_\theta^2 p_{Y|x}(y_i)}{p_{Y|x}(y_i)}\\
&+\left\{\nabla_\theta \log p_{Y|x}(y_i)\right\}\left\{\nabla_\theta \log p_{Y|x}(y_i)\right\}^\top] \\
& =C_x+B_x+F_x,
\end{aligned}
\end{equation}
where 
\begin{equation}
    \begin{aligned}
        B_x &=\sum_{i=1}^{|\mathcal{Y}|} q_{Y|x}(y_i)\nabla_{\theta} [f_\theta(x)]_i\nabla_{\theta} [f_\theta(x)]_i^\top,\\
        C_x&=\sum_{i=1}^{|\mathcal{Y}|} q_{Y|x}(y_i)\left[-\frac{\nabla_\theta^2 p_{Y|x}(y_i)}{p_{Y|x}(y_i)}\right].
    \end{aligned}
\end{equation}

Since $\nabla_{\theta} [f_\theta(x)]_i\nabla_{\theta} [f_\theta(x)]_i^\top$ is a matrix of rank 1, its eigenvalues are
\begin{equation}
    \begin{aligned}
        \lambda_{\max}(\nabla_{\theta} [f_\theta(x)]_i\nabla_{\theta} [f_\theta(x)]_i^\top)&=\nabla_{\theta} [f_\theta(x)]_i^\top \nabla_{\theta} [f_\theta(x)]_i,\\
        \lambda_{\min}(\nabla_{\theta} [f_\theta(x)]_i\nabla_{\theta} [f_\theta(x)]_i^\top)&=0.
    \end{aligned}
\end{equation}

According to lemma~\ref{lemma:eigenvalue}, we can bound the extreme eigenvalues of $B_x$ as follows: 
\begin{equation}\label{eq:temp_eigen}
    \begin{aligned}
        \lambda_{\max}(B_x)&\le \sum_{i=1}^{|\mathcal{Y}|} q_{Y|x}(y_i) \lambda_{\max}(\nabla_{\theta} [f_\theta(x)]_i\nabla_{\theta} [f_\theta(x)]_i^\top)\\
        &\le\max_{y}\lambda_{\max}(\nabla_{\theta} [f_\theta(x)]_i\nabla_{\theta} [f_\theta(x)]_i^\top)\\
        &=\max_y \nabla_{\theta} [f_\theta(x)]_i^\top \nabla_{\theta} [f_\theta(x)]_i,\\
        \lambda_{\min}(B_x)&\ge \sum_{i=1}^{|\mathcal{Y}|} q_{Y|x}(y_i) \lambda_{\min}(\nabla_{\theta} [f_\theta(x)]_i\nabla_{\theta} [f_\theta(x)]_i^\top)\\
        &\ge\min_{y}\lambda_{\min}(\nabla_{\theta} [f_\theta(x)]_i\nabla_{\theta} [f_\theta(x)]_i^\top)\\
        &=0.
    \end{aligned}
\end{equation}

Given that $S(\theta,x)= \nabla_\theta f_\theta(x)^\top \nabla_\theta f_\theta(x)$, and since $S(\theta,x)$ is a symmetric matrix, according to the inequality~\eqref{eq:temp_eigen} and lemma~\ref{lemma:eigenvalue_diag}, we have: 
\begin{equation}
    \begin{aligned}
        \lambda_{\max}(B_x)\le \max_y \nabla_{\theta} [f_\theta(x)]_i^\top \nabla_{\theta} [f_\theta(x)]_i &\le \lambda_{\max}(S(\theta,x)).
    \end{aligned}
\end{equation}
The Hessian for $\mathbb E_{X\sim q_X}D_{KL}(q_{Y|x},p_{Y|x})$ can be denoted as:
\begin{equation}
    \begin{aligned}
        &H=\mathbb E_{X\sim q_X} B_x + \mathbb E_{X\sim q_X} C_x +\mathbb E_{X\sim q_X} F_x. 
    \end{aligned}
\end{equation}
Since 
\begin{equation}
    \begin{aligned}
        &\sum_{i=1}^{|\mathcal{Y}|} q_{Y|x}(y_i) \nabla_\theta [f_\theta(x)]_i= 0,\\
        &\mathbb E_{X\sim q_X}\sum_{i=1}^{|\mathcal{Y}|} q_{Y|x}(y_i)\frac{\nabla_\theta^2 p_{Y|x}(y_i)}{p_{Y|x}(y_i)}= 0,
    \end{aligned}
\end{equation}
and since $H= \mathbb E_{X\sim q_X} B_x$, we deduce
\begin{equation}
    \begin{aligned}
        \lambda_{\max}(H)\le \max_{x}\lambda_{\max}(B_x)\le \max_{x}\lambda_{\max}(S(\theta,x)).
    \end{aligned}
\end{equation}
\end{proof}

As shown in proposition~\ref{prop:NTK_ERF}, minimizing $\lambda_{\max}(S(\theta,x)),\forall x\in \mathcal{X}$, is equivalent to minimizing $\lambda_{\max}(H)$ and our proposed bound on fitting error provides an alternative theoretical explanation for the flat minima theory.

\section{Empirical Validations}
\label{sec:experiments}

\begin{figure*}[!htb]
\centering
\includegraphics[width=0.6\linewidth]{./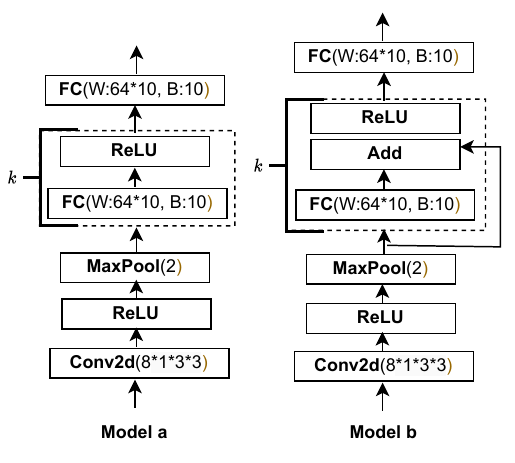}
\caption{Model architectures and configuration parameters.}
\label{fig:model_increase}
\end{figure*}

\begin{figure*}[!htb]
\centering
\includegraphics[width=0.9\linewidth]{./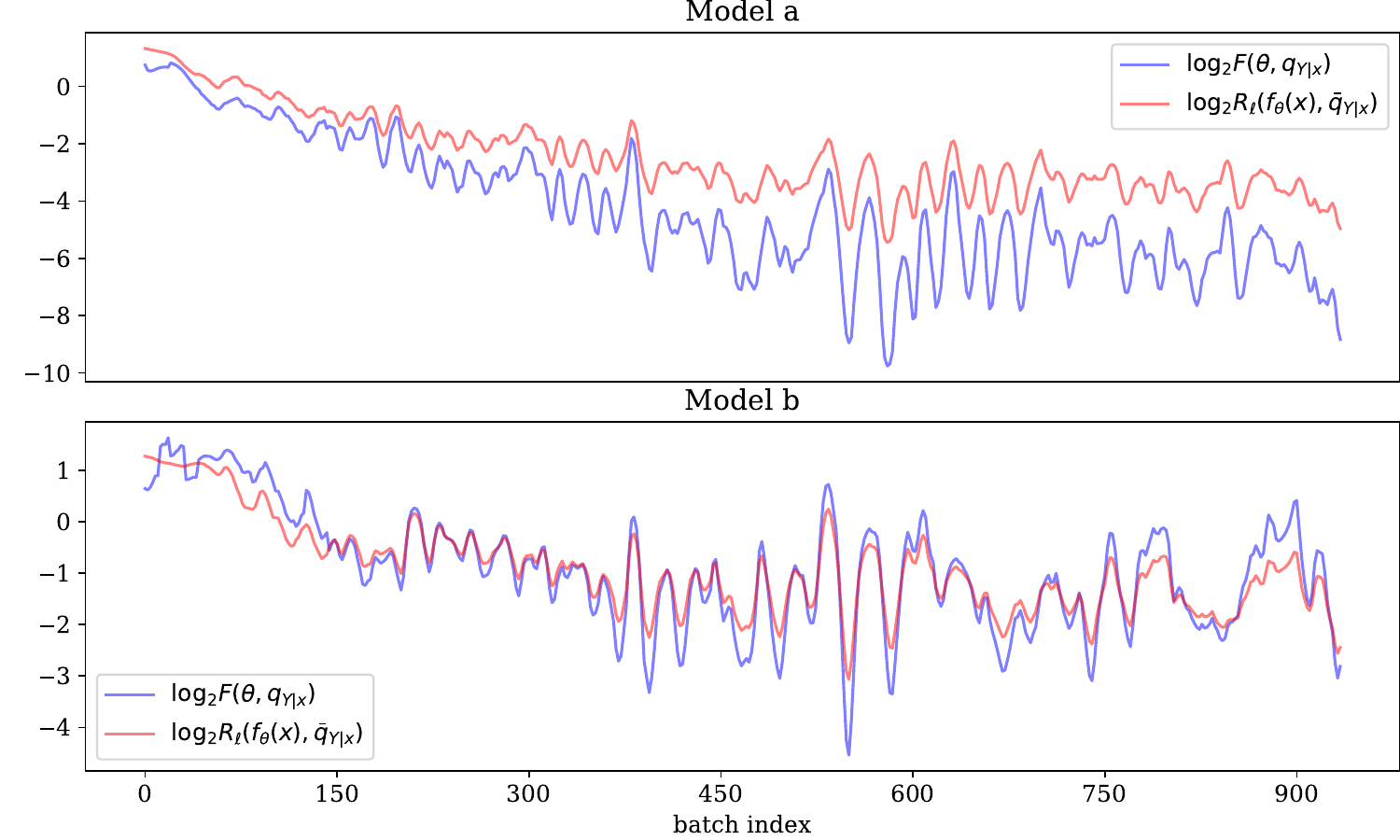}
\caption{Model architectures and configuration parameters.}
\label{fig:convergence_indicator}
\end{figure*}

\begin{figure*}[!htb]
\centering
\includegraphics[width=0.9\linewidth]{./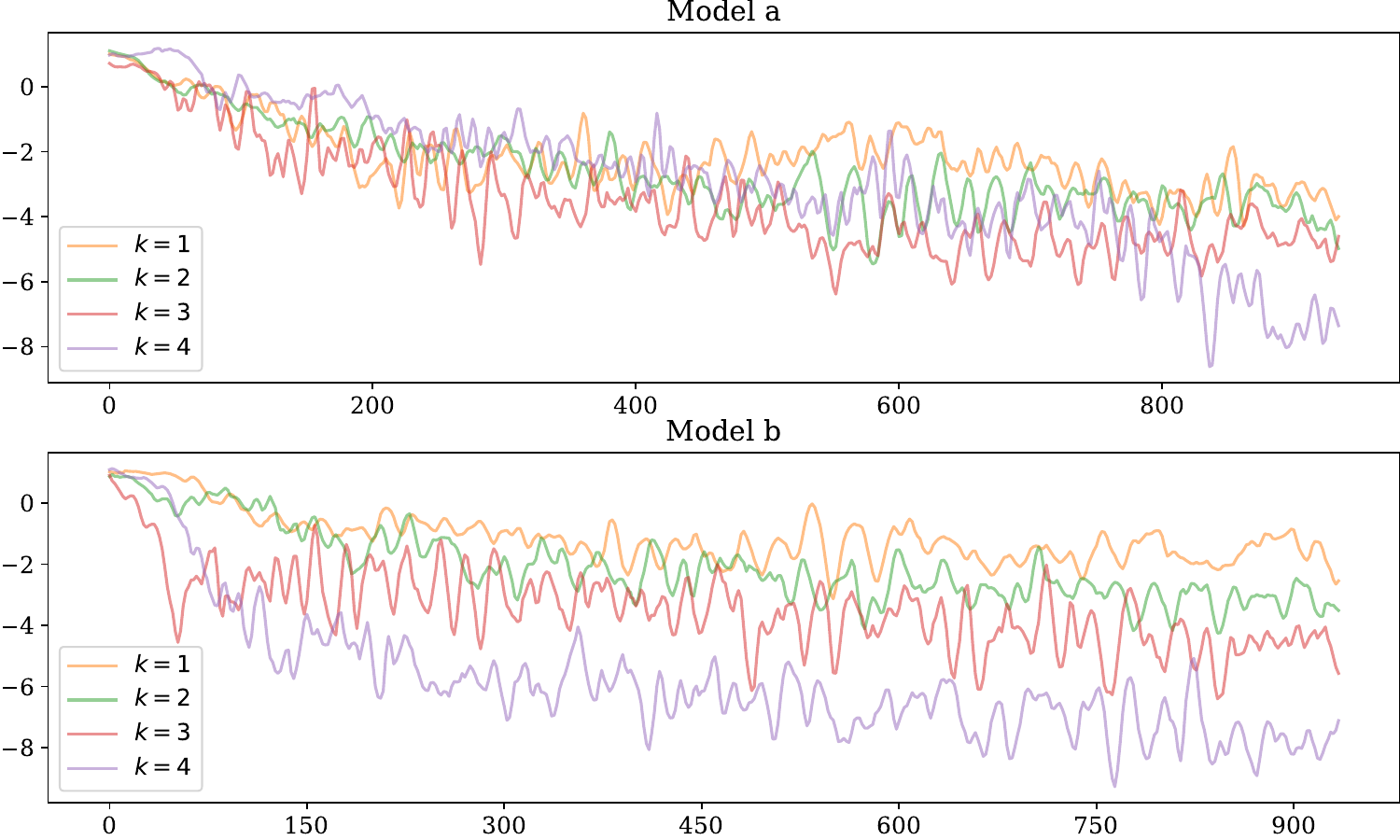}
\caption{Model architectures and configuration parameters.}
\label{fig:layer_convergence}
\end{figure*}

In the preceding section, we theoretically examine the proposed expected risk bound. This section aims to empirically validate its effectiveness.
Given the same classification task, the inherent classification uncertainty remains constant, thus we concentrate on the fitting error aspect. 
Within the upper bound of the fitting error, $G(\theta,q_{Y|x})$ exhibits stochastic characteristics during the gradient descent iterations, while $F(\theta,q_{Y|x})$ is consistent with the currently popular flat minima theory under reasonable assumptions, as demonstrated in proposition~\ref{prop:NTK_ERF}. 
More importantly, the item  $F(\theta,q_{Y|x})$ is both controllable and optimizable during the model training process. 
Consequently, the key to validating the effectiveness of our proposed upper bound lies in confirming the correlation between $F(\theta,q_{Y|x})$ and the expected risk $R_{\ell}(f_\theta,\bar q)$. 
In this empirical validation, we therefore assess the effectiveness of the proposed bound by observing the correlation between the changes in the loss on the test set and the variations in $F(\theta,q_{Y|x})$ during the training process. 
Furthermore, we will examine the variation in the discrepancy between $R_{\ell}(f_\theta,\bar q)$ and $F(\theta,q_{Y|x})$ by increasing the number of network layers and parameters. This investigation aims to substantiate the conclusion drawn in Subsection~\ref{subsec:op} , which posits that augmenting the parameter count facilitates the reduction of $G(\theta,q_{Y|x})$, thereby narrowing the gap between $R_{\ell}(f_\theta,\bar q)$ and $F(\theta,q_{Y|x})$.

In our experiments, we utilized the MNIST database~\cite{Yang2019CARSCE} as a case study to visualize how the structural error, gradient norm and fitting error evolve as we increase the network depth across different model architectures. 
This database has 60k training images and 10k testing images in 10 categories, containing 10 handwritten digits. 
In terms of modeling, we selected two types of typical network architectures: standard network and those incorporating residual blocks. Both are widely utilized frameworks in the field of deep learning.
The model architectures and configuration parameters in the experiment are depicted in Figures~\ref{fig:model_increase}. 

The experiments were executed using the following computing environment: Python 3.7, Pytorch 2.2.2, and a GeForce RTX2080Ti GPU. 
The training parameters are detailed in Table~\ref{tab:config}. 
\begin{table}[!htb]
\caption{The configuration of Training.}
\label{tab:config}
\centering
\begin{tabular}{l l l l}
\hline
loss function& cross-entropy loss & \#epochs & 1 \\
\hline
batch size & 64 & learning rate& 0.01\\
\hline
momentum & 0.9 & data size& 28*28 \\

\hline

\end{tabular}

\end{table}

We assign a value of $k=1$ to both Model a and Model b and proceed to train these models. The variations in $F(\theta,q_{Y|x})$ and $R_{\ell}(f_\theta(x),\bar q_{Y|x})$ for any arbitrary $x$ throughout the training process are depicted in Figures~\ref{fig:convergence_indicator}. 

As illustrated in Figures~\ref{fig:convergence_indicator}, for both Model a and Model b, the trend of change in $R_{\ell}(f_\theta(x),\bar q_{Y|x})$ and $F(\theta,q_{Y|x})$ remains entirely consistent as the models progress toward convergence. 
In other words, the expected risk is completely governed by its upper bound $F(\theta,q_{Y|x})$. This experimental finding validates that the upper bound of the expected risk we proposed is sufficiently tight and effective.
Furthermore, upon comparing the discrepancies between the curves of Model a and Model b, it is unsurprising that Model b converges more rapidly due to the presence of residual blocks. However, it is noteworthy that Model a achieves a smaller expected risk than Model b. 
We believe that residual blocks, via skip connections, prevent the gradient from becoming excessively large or small, which results in a larger $F(\theta,q_{Y|x})$ and expected risk for Model b compared to Model a.

Since $G(\theta,q_{Y|x})$ and $F(\theta,q_{Y|x})$ jointly constitute the upper bound of $R_{\ell}(f_\theta(x),\bar q_{Y|x})$, the aforementioned experiment has demonstrated that this upper bound is sufficiently tight when the model converges. Therefore, it is reasonable to approximate $G(\theta,q_{Y|x})$ with $R_{\ell}(f_\theta(x),\bar q_{Y|x})-F(\theta,q_{Y|x})$ at the point of model convergence. To investigate the impact of the parameter count on $G(\theta,q_{Y|x})$, we increase the number of network layers $k$ from 1 to 4 and observe the curve of $\log (R_{\ell}(f_\theta(x),\bar q_{Y|x})-F(\theta,q_{Y|x}))$ as depicted in the following Figure~\ref{fig:layer_convergence}.

As illustrated in Figures~\ref{fig:layer_convergence}, upon the convergence of the losses on the training dataset, the greater the number of layers in the model, the smaller the corresponding $\log (R_{\ell}(f_\theta(x),\bar q_{Y|x})-F(\theta,q_{Y|x}))$ becomes. This implies that $G(\theta,q_{Y|x})$ diminishes with the increase in the parameter count, thereby corroborating the conclusion drawn in Subsection~\ref{subsec:op}. 
From this perspective, the role of overparameterization is not to alter the model's generalization capability but to tighten the upper bound of the fitting error by reducing $G(\theta,q_{Y|x})$.


\section{Conclusion}
\label{sec:conclusion}
This study introduces a novel upper bound for expected risk in supervised classification, incorporating the concepts of fitting error and model risk. The derived upper bound connects the gradient norm and the model's parameter count to the fitting error, providing valuable insights into the overparameterization, non-convex optimization inherent in DNNs and flat minima theory.
Moreover, we establish a relationship between the generalization error, classification uncertainty, and the smoothness of the dataset distribution, proposing that classification uncertainty can serve as a metric for the reliability of the training dataset and can assist in the determination of hyperparameters for regularization. 
Additionally, our experimental results confirm a significant positive correlation between the proposed upper bounds and the expected risk encountered in practical applications. 
Thus, based on the new upper bound, it becomes possible to achieve better model performance and inspire the development of new algorithms by controlling the key factors in the upper bound. 

\backmatter

\bibliography{sn-bibliography}

\end{document}